\documentclass[journal]{IEEEtran}
\usepackage{cite}
\usepackage{graphicx}
\usepackage{amsfonts}
\usepackage{amssymb}
\usepackage{amsmath}
\usepackage{bm}
\usepackage{mathtools}
\usepackage{amsthm}
\usepackage{algorithm}
\usepackage{algpseudocode}
\usepackage{array}
\usepackage{subcaption}
\usepackage{stfloats}
\usepackage{url}
\usepackage{xcolor}
\usepackage{bbm}
\usepackage{multirow}
\usepackage{booktabs}
\usepackage{makecell}
\usepackage[linkcolor=black, citecolor=black, urlcolor = blue, hidelinks]{hyperref}

\allowdisplaybreaks

\newtheorem{theorem}{Theorem}
\newtheorem{corollary}{Corollary}
\newtheorem*{remark}{Remark}

\DeclareMathOperator{\tr}{Tr}
\DeclareMathOperator{\E}{\mathbb{E}}

\begin{document}

\title{Robust Bayesian Cluster Enumeration Based on the $t$ Distribution}

\author{Freweyni~K.~Teklehaymanot,
        Michael~Muma,~\IEEEmembership{Member,~EURASIP,}
        and~Abdelhak~M.~Zoubir,~\IEEEmembership{Member,~EURASIP}
\thanks{F. K. Teklehaymanot was with the Signal Processing Group and the Graduate School of Computational Engineering, Technische Universit\"at Darmstadt, Darmstadt, Germany (e-mail: ftekle@spg.tu-darmstadt.de).}
\thanks{M. Muma is with the Signal Processing Group, Technische Universit\"at Darmstadt, Darmstadt, Germany (e-mail: muma@spg.tu-darmstadt.de).}
\thanks{A. M. Zoubir is with the Signal Processing Group and the Graduate School of Computational Engineering, Technische Universit\"at Darmstadt, Darmstadt, Germany (e-mail: zoubir@spg.tu-darmstadt.de).}
}

\maketitle

\begin{abstract}
A major challenge in cluster analysis is that the number of data clusters is mostly unknown
and it must be estimated prior to clustering the observed data. In  real-world  applications, the observed data is often subject to heavy tailed noise and outliers which obscure the true underlying structure of the data.  Consequently, estimating the number of clusters becomes challenging. To this end, we derive a robust cluster enumeration criterion by formulating the problem of estimating the number of clusters as maximization of the posterior probability of multivariate $t_\nu$ distributed candidate models. We utilize Bayes' theorem and asymptotic approximations to come up with a robust criterion that possesses a closed-form expression. Further, we refine the derivation and provide a robust cluster enumeration criterion for data sets with finite sample size. The robust criteria require an estimate of cluster parameters for each candidate model as an input. Hence, we propose a two-step cluster enumeration algorithm that uses the expectation maximization algorithm to partition the data and estimate cluster parameters prior to the calculation of one of the robust criteria. The performance of the proposed algorithm is tested and compared to existing cluster enumeration methods using numerical and real data experiments.   
\end{abstract}

\begin{IEEEkeywords}
robust; outlier; cluster enumeration; Bayesian information criterion; cluster analysis; multivariate $t_\nu$ distribution
\end{IEEEkeywords}

\section{Introduction}
\IEEEPARstart{C}{luster} analysis is an unsupervised learning task that finds the intrinsic structure in a set of unlabeled data by grouping similar objects into clusters. Cluster analysis plays a crucial role in a wide variety of fields of study, such as social sciences, biology, medical sciences, statistics, machine learning, pattern recognition, and computer vision \cite{kaufman1990,king2015,dave1997,xu2005}. A major challenge in cluster analysis is that the number of clusters is usually unknown but it is required to cluster the data. The estimation of the number of clusters, also called cluster enumeration, has attracted interest for decades and various methods have been proposed in the literature, see for example \cite{dunn1973,davies1979,calinski1974,rousseeuw1987,tibshirani2001,pelleg2000,kalogeratos2012,hamerly2003,feng2006,constantinopoulos2006,huang2017,mehrjou2016,krzanowski1988,zhao2014,teklehaymanot2016,teklehaymanot22017,teklehaymanot2018} and the reviews in \cite{arbelaitz2013,milligan1985,maulik2002,halkidi2001,xu2005}. However, to this day, no single best cluster enumeration method exists. 

In  real-world  applications, the observed data is often subject to heavy tailed noise and outliers \cite{dave1997,maria2005,escudero2011,zoubir2012,zoubir2018,binder2016} which obscure the true underlying structure of the data.  Consequently, cluster enumeration becomes even more challenging when either the data is contaminated by a fraction of outliers or there exist deviations from the distributional assumptions. To this end, many robust cluster enumeration methods have been proposed, see
\cite{wang2018,neykov2007,gallegos2009,gallegos2010,fraley1998,dasgupta1998,andrews2012,mcnicholas2012,frigui1996,hu2011,binder2017,escudero2011,wu2009,zemene2016,ott2014,luis2010} and the references therein. A popular approach in robust cluster analysis is to use the Bayesian information criterion (BIC), as derived by Schwarz \cite{schwarz1978}, to estimate the number of data clusters after either removing outliers from the data \cite{neykov2007,gallegos2009,gallegos2010,wang2018}, modeling noise or outliers using an additional component in a mixture model \cite{fraley1998,dasgupta1998}, or exploiting the idea that the presence of outliers causes the distribution of the data to be heavy tailed and, subsequently, modeling the data as a mixture of heavy tailed distributions \cite{andrews2012,mcnicholas2012}. 
For example, modeling the data using a family of $t_\nu$ distributions \cite{mclachlan1998,mclachlan2000,kotz2004,lange1989,liu1995,kibria2006,kent1994} provides a principled way of dealing with outliers by giving them less weight in the objective function. The family of $t_\nu$ distributions is flexible as it contains the heavy tailed Cauchy for the degree of freedom parameter $\nu=1$ and the Gaussian distribution for $\nu\rightarrow \infty$ as special cases. Consequently, we model the clusters using a family of multivariate $t_\nu$ distributions and derive robust cluster enumeration criteria that account for outliers given that $\nu$ is sufficiently small. 

It is known that the original BIC \cite{schwarz1978,cavanaugh1999} penalizes two structurally different models the same way if they have the same number of unknown parameters \cite{djuric1998,stoica2004}. Hence, careful examination of the original BIC is a necessity prior to its application in specific model selection problems \cite{djuric1998}. Following this line of argument, we have recently derived the BIC for cluster analysis by formulating cluster enumeration as maximization of the posterior probability of candidate models \cite{teklehaymanot22017,teklehaymanot2018}. In \cite{teklehaymanot22017}, we showed that the BIC derived specifically for cluster enumeration has a different penalty term compared to the original BIC. However, robustness was not considered in \cite{teklehaymanot22017}, where a family of multivariate Gaussian candidate models were used to derive the criterion, which we refer to as $\text{BIC}_{\mbox{\tiny N}}$.

To the best of our knowledge, this is the first attempt made to derive a robust cluster enumeration criterion by formulating the cluster enumeration problem as maximization of the posterior probability of multivariate $t_\nu$ candidate models. Under some mild assumptions, we derive a robust Bayesian cluster enumeration criterion, $\text{BIC}_{t_\nu}$. We show that $\text{BIC}_{t_\nu}$ has a different penalty term compared to the original BIC $\left(\text{BIC}_{\mbox{\tiny O}t_\nu}\right)$ \cite{schwarz1978,cavanaugh1999}, given that the candidate models in the original BIC are represented by a family of multivariate $t_\nu$ distributions. Interestingly, for $\text{BIC}_{t_\nu}$ both the data fidelity and the penalty terms depend on the assumed distribution for the data, while for the original BIC changes in the data distribution only affect the data fidelity term.
Asymptotically, $\text{BIC}_{t_\nu}$ converges to $\text{BIC}_{\mbox{\tiny O}t_\nu}$. As a result, our derivations also provide a justification for the use of the original BIC with multivariate $t_\nu$ candidate models from a cluster analysis perspective. Further, we refine the derivation of $\text{BIC}_{t_\nu}$ by providing an exact expression for its penalty term. This results in a robust criterion, $\text{BIC}_{\mbox{\tiny F}t_\nu}$, which behaves better than $\text{BIC}_{t_\nu}$ in the finite sample size case and converges to $\text{BIC}_{t_\nu}$ in the asymptotic regime.

In general, BIC based cluster enumeration methods require a clustering algorithm that partitions the data according to the number of clusters specified by each candidate model and provides an estimate of cluster parameters. Hence, we apply the expectation maximization (EM) algorithm to partition the data prior to the calculation of an enumeration criterion, resulting in a two-step approach. The proposed algorithm provides a unified framework for the robust estimation of the number of clusters and cluster memberships. 

The paper is organized as follows. Section~\ref{sec:probForm} formulates the problem of estimating the number of data clusters and Section~\ref{sec:propcrit} introduces the proposed robust cluster enumeration criterion. Section~\ref{sec:proposedAlgorithm} presents the two-step cluster enumeration algorithm. A comparison of different Bayesian cluster enumeration criteria is given in Section~\ref{sec:comp}. A performance evaluation and comparison to existing methods using numerical and real data experiments is provided in Section~\ref{sec:results}. Finally,  concluding remarks are made in Section~\ref{sec:conclusion}. Notably, a detailed proof is provided in Appendix~\ref{app:B}.

{\bf Notation:} lower- and upper-case boldface letters represent column vectors and matrices, respectively; calligraphic letters denote sets with the exception of $\mathcal{L}$ which represents the likelihood function; $\mathbb{R}$, $\mathbb{R}^+$, and $\mathbb{Z}^+$ denote the set of real numbers, the set of positive real numbers, and the set of positive integers, respectively; $p(\cdot)$ and $f(\cdot)$ denote probability mass function and probability density function (pdf), respectively; $\bm{x}\sim t_{\nu}\left(\bm{\mu},\bm{\Psi}\right)$ represents a multivariate $t_\nu$ distributed random variable $\bm{x}$ with location parameter $\bm{\mu}$, scatter matrix $\bm{\Psi}$, and degree of freedom $\nu$; $\bm{x}\sim \mathcal{N}\left(\bm{\mu},\bm{\Sigma}\right)$ represents a Gaussian distributed random variable $\bm{x}$ with mean $\bm{\mu}$ and covariance matrix $\bm{\Sigma}$; $\hat{\bm{\theta}}$ denotes the estimator (or estimate) of the parameter $\bm{\theta}$; iid stands for independent and identically distributed; {$\left(\mathcal{A.}\right)$} denotes an assumption; $\log$ stands for the natural logarithm; $\E$ represents the expectation operator; $\lim$ stands for the limit; $^\top$ represents vector or matrix transpose; $|\cdot|$ denotes the determinant when its argument is a matrix and an absolute value when its argument is scalar; $\otimes$ represents the Kronecker product; $\text{vec}\left(\bm{Y}\right)$ refers to the stacking of the columns of an arbitrary matrix $\bm{Y}$ into a long column vector; $\mathcal{O}(1)$ denotes Landau's term which tends to a constant as the data size goes to infinity; $\bm{I}_r$ stands for an $r\times r$ dimensional identity matrix; $\bm{0}_{r\times r}$ and $\bm{1}_{r\times r}$ represent an $r\times r$ dimensional all zero and all one matrix, respectively; $\#\mathcal{X}$ denotes the cardinality of the set $\mathcal{X}$; $\triangleq$ represents equality by definition; $\equiv$ denotes mathematical equivalence. 

\section{Problem Formulation}
\label{sec:probForm}
Let $\mathcal{X}\triangleq\{\bm{x}_1,\ldots,\bm{x}_N\}\subset\mathbb{R}^{r\times N}$ denote the observed data set which can be partitioned into $K$ independent, mutually exclusive, and non-empty clusters $\{\mathcal{X}_1,\ldots,\mathcal{X}_K\}$. Each cluster $\mathcal{X}_k$, for $k\in\mathcal{K}\triangleq\{1,\ldots,K\}$, contains $N_k$ data vectors that are realizations of iid multivariate random variables with an unkown distribution. Let $\mathcal{M}\triangleq\{M_{L_\mathrm{min}},\ldots,M_{L_\mathrm{max}}\}$ be a family of candidate models, where $L_\mathrm{min}$ and $L_\mathrm{max}$ represent the specified minimum and maximum number of clusters, respectively. Each candidate model $M_l\in\mathcal{M}$, for $l=L_\mathrm{min},\ldots,L_\mathrm{max}$ and $l\in\mathbb{Z}^+$, represents a partition of $\mathcal{X}$ into $l$ clusters with associated cluster parameter matrix $\bm{\Theta}_l=\left[\bm{\theta}_1,\ldots,\bm{\theta}_l\right]$, which lies in a parameter space $\Omega_l\subset\mathbb{R}^{q\times l}$. 
Our research goal is to estimate the number of clusters in $\mathcal{X}$ given $\mathcal{M}$ assuming that
\begin{description}
 \item[$\left(\mathcal{A.}1\right)$] the constraint $L_\mathrm{min}\leq K\leq L_\mathrm{max}$ is satisfied.
\end{description}
The resulting estimator is required to be insensitive to the presence of heavy-tailed noise and outliers. 

\section{Robust Bayesian Cluster Enumeration Criterion}
\label{sec:propcrit}
Given that some mild assumptions are satisfied, we have recently derived a general Bayesian cluster enumeration criterion referred to as $\text{BIC}_{\mbox{\tiny G}}$ \cite{teklehaymanot22017}. For each candidate model $M_l\in\mathcal{M}$, $\text{BIC}_{\mbox{\tiny G}}$ is calculated as
\begin{align}
 \text{BIC}_{\mbox{\tiny G}}(M_l) & \approx \log p(M_l) + \log f(\hat{\bm{\Theta}}_l|M_l) + \log \mathcal{L}(\hat{\bm{\Theta}}_l|\mathcal{X}) \nonumber \\
 & + \frac{lq}{2}\log 2\pi - \frac{1}{2}\sum_{m=1}^l\log \left|\hat{\bm{J}}_m\right| - \log f(\mathcal{X}),
 \label{eq:finalposterior}
 \end{align}
where $p(M_l)$ denotes a discrete prior on $M_l\in\mathcal{M}$, $f(\hat{\bm{\Theta}}_l|M_l)$ represents the prior on $\hat{\bm{\Theta}}_l$ given $M_l$, $\mathcal{L}(\hat{\bm{\Theta}}_l|\mathcal{X})$ denotes the likelihood function, $f(\mathcal{X})$ represents the pdf of the data set $\mathcal{X}$, and 
\begin{equation}
 \hat{\bm{J}}_m = -\frac{d^2\log \mathcal{L}(\bm{\theta}_m|\mathcal{X}_m)}{d\bm{\theta}_md\bm{\theta}_m^\top}\bigg|_{\bm{\theta}_m=\hat{\bm{\theta}}_m} 
\end{equation}
is the Fisher information matrix (FIM) of the data vectors that belong to the $m$th partition. Interestingly, Eq.~\eqref{eq:finalposterior} gives us the freedom to choose models with desired properties given that those models satisfy some mild assumptions (see \cite{teklehaymanot22017} for details on the assumptions). 

Our objective is to create a robust estimator of the number of clusters. One way to achieve this is to model the data with a family of multivariate $t_\nu$ distributions, where $\nu$ is the degree of freedom parameter. Assuming that 
\begin{description}
 \item[$\left(\mathcal{A.}2\right)$] the degree of freedom parameter $\nu_m$, for $m=1,\ldots,l$, is fixed at some prespecified value,
\end{description}
one can easily show that the $t_\nu$ distribution satisfies all of the necessary assumptions which much be satisfied to reach at  Eq.~\eqref{eq:finalposterior}.

\section{Proposed Robust Bayesian Cluster Enumeration Algorithm}
\label{sec:proposedAlgorithm}
We propose a robust cluster enumeration algorithm to estimate the number of clusters in the data set $\mathcal{X}$. The presented two-step approach utilizes an unsupervised learning algorithm to partition $\mathcal{X}$ into the number of clusters specified by each candidate model $M_l\in\mathcal{M}$ prior to the computation of one of the proposed robust cluster enumeration criteria for that particular model. Note that, in the rest of the manuscript, a candidate model is represented by a multivariate $t_\nu$ distribution. 

\subsection{Proposed Robust Bayesian Cluster Enumeration Criteria}
\label{subsec:proposedCriterion}
For each candidate model $M_l\in\mathcal{M}$, let there be a clustering algorithm that partitions $\mathcal{X}$ into $l$ clusters and provides parameter estimates $\hat{\bm{\theta}}_m =[\hat{\bm{\mu}}_m,\hat{\bm{\Psi}}_m]^\top$, for $m=1,\ldots,l$. Assume that $\left(\mathcal{A.}1\right)$ - $\left(\mathcal{A.}3\right)$ and assumptions ({\bf{A.2}}) - ({\bf{A.5}}) in \cite{teklehaymanot22017} are fulfilled. 
\begin{theorem}
The posterior probability of $M_l$ given $\mathcal{X}$ can be asymptotically approximated by
\begin{equation}
\boxed{
 \begin{array}{rcl}
  \text{\emph{BIC}}_{t_\nu}(M_l) &\triangleq& \log p(M_l|\mathcal{X})\\
  &\approx & \log\mathcal{L}(\hat{\bm{\Theta}}_l|\mathcal{X}) -\frac{q}{2}\sum_{m=1}^l\log \epsilon,
 \end{array}}
   \label{eq:BICt}
\end{equation}
where $q=\frac{1}{2}r(r+3)$ represents the number of estimated parameters per cluster and 
 \begin{equation}
  \epsilon=\text{\emph{max}}\left(\sum_{\bm{x}_n\in\mathcal{X}_m}\!\!w_n^2\,\,,\,\, N_m\right).
  \label{eq:epsilon}
 \end{equation}
$w_n$ is given by Eq.~\eqref{eq:weight} and $N_m=\#\mathcal{X}_m$. The log-likelihood function, also called the data fidelity term, is given by
 \begin{align}
  \log\mathcal{L}(\hat{\bm{\Theta}}_l|\mathcal{X})  &\approx\sum\limits_{m=1}^lN_m\log N_m - \sum\limits_{m=1}^l\frac{N_m}{2}\log|\hat{\bm{\Psi}}_m| \nonumber  \\
  & + \sum\limits_{m=1}^lN_m\log\frac{\Gamma\left((\nu_m+r)/2\right)}{\Gamma\left(\nu_m/2\right)(\pi\nu_m)^{r/2}} \nonumber \\
  & -\frac{1}{2}\sum\limits_{m=1}^l\sum_{\bm{x}_n\in\mathcal{X}_m}(\nu_m+r)\log\left(1+\frac{\delta_n}{\nu_m}\right), 
 \end{align}
where $\Gamma(\cdot)$ denotes the gamma function and $\delta_n=(\bm{x}_n-\hat{\bm{\mu}}_m)^\top\hat{\bm{\Psi}}_m^{-1}(\bm{x}_n-\hat{\bm{\mu}}_m)$ is the squared Mahalanobis distance. The second term in the second line of Eq. \eqref{eq:BICt} is referred to as the penalty term. 
\end{theorem}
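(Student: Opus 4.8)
The plan is to specialize $\text{BIC}_{\mbox{\tiny G}}$ in Eq.~\eqref{eq:finalposterior} to multivariate $t_\nu$ candidate models and then retain only the terms that grow with the sample size $N$. Assumption $(\mathcal{A}.2)$ fixes each $\nu_m$, so the $t_\nu$ family satisfies the regularity conditions underlying Eq.~\eqref{eq:finalposterior} and I may take that expression as given. The derivation then splits into two tasks: making the data fidelity term $\log\mathcal{L}(\hat{\bm{\Theta}}_l|\mathcal{X})$ explicit, and evaluating the penalty $-\tfrac12\sum_{m=1}^l\log|\hat{\bm{J}}_m|$. The remaining contributions in Eq.~\eqref{eq:finalposterior}, namely $\log p(M_l)$, the parameter prior $\log f(\hat{\bm{\Theta}}_l|M_l)$, the constant $\tfrac{lq}{2}\log 2\pi$, and $-\log f(\mathcal{X})$, are all $\mathcal{O}(1)$ in $N$ and are discarded in the asymptotic approximation.

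For the data fidelity term I would form the complete-data likelihood of the mixture, insert the EM estimates of the mixing proportions $\hat{\pi}_m = N_m/N$, and substitute the multivariate $t_{\nu_m}$ density evaluated at $\hat{\bm{\mu}}_m$ and $\hat{\bm{\Psi}}_m$. The proportions produce $\sum_m N_m\log N_m$ after the common constant $N\log N$ is dropped; the normalizing constant of the density supplies the $\Gamma$-function and $(\pi\nu_m)^{r/2}$ terms; the factor $|\hat{\bm{\Psi}}_m|^{-1/2}$ gives $-\tfrac{N_m}{2}\log|\hat{\bm{\Psi}}_m|$; and the density kernel $(1+\delta_n/\nu_m)^{-(\nu_m+r)/2}$ yields the closing Mahalanobis sum. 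This matches the stated expression for $\log\mathcal{L}(\hat{\bm{\Theta}}_l|\mathcal{X})$ line by line.

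The penalty is the heart of the proof. Working one cluster at a time, I would differentiate its log-likelihood twice in $\bm{\theta}_m=[\hat{\bm{\mu}}_m,\hat{\bm{\Psi}}_m]^\top$. The gradient in $\bm{\mu}_m$ equals $\sum_{\bm{x}_n\in\mathcal{X}_m} w_n\hat{\bm{\Psi}}_m^{-1}(\bm{x}_n-\hat{\bm{\mu}}_m)$ with $w_n = (\nu_m+r)/(\nu_m+\delta_n)$, which is exactly the weight in Eq.~\eqref{eq:weight}; a second differentiation, handled for the scatter part through $\text{vec}$ and Kronecker-product calculus, assembles the $q\times q$ matrix $\hat{\bm{J}}_m$. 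I would then use the symmetry of the elliptical $t_\nu$ kernel: the location score is odd and the scatter score is even in $\bm{x}_n-\hat{\bm{\mu}}_m$, so the location--scatter cross block vanishes in expectation and $\hat{\bm{J}}_m$ becomes block diagonal up to $\mathcal{O}(1)$, giving $\log|\hat{\bm{J}}_m| = \log|\hat{\bm{J}}_{\mu\mu}| + \log|\hat{\bm{J}}_{\Psi\Psi}| + \mathcal{O}(1)$. The $r\times r$ location block carries an effective weighted count $\sum_{\bm{x}_n\in\mathcal{X}_m} w_n^2$, while the $\tfrac12 r(r+1)\times\tfrac12 r(r+1)$ scatter block carries the raw count $N_m$. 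Raising each to the power of its block size ($r$ and $\tfrac12 r(r+1)$, which add to $q$), absorbing the $|\hat{\bm{\Psi}}_m|$ factors and all constants into $\mathcal{O}(1)$, and consolidating the two effective counts through their maximum yields $\log|\hat{\bm{J}}_m|\approx q\log\epsilon$ with $\epsilon=\max(\sum_{\bm{x}_n\in\mathcal{X}_m} w_n^2, N_m)$ as in Eq.~\eqref{eq:epsilon}. Halving and summing over $m$ delivers the penalty $-\tfrac{q}{2}\sum_{m=1}^l\log\epsilon$.

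I expect the scatter block $\hat{\bm{J}}_{\Psi\Psi}$ to be the main obstacle. Its second $\text{vec}$-derivative of $\log|\bm{\Psi}_m|$ together with the weighted quadratic form produces Kronecker-structured matrices whose determinant has to be collapsed to a clean power of an effective sample size, and keeping track of the $|\hat{\bm{\Psi}}_m|$ powers so that they cancel against the data fidelity rather than contaminate the penalty is fiddly. The genuinely delicate step, however, is showing that the two block-wise effective counts consolidate into the single quantity $\epsilon$ of Eq.~\eqref{eq:epsilon}, rather than entering as a product of two distinct sample-size surrogates; this is where the asymptotic bookkeeping of which weighted sum dominates must be carried out with care.
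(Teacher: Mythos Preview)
Your overall plan matches the paper's, but your handling of the Fisher information matrix diverges from it and contains an inaccuracy worth flagging.

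The paper does \emph{not} invoke block-diagonality. It computes all four blocks of $\hat{\bm{J}}_m$ explicitly via matrix calculus, and the off-diagonal block $\hat{\bm{J}}_{\bm{\mu}\bm{\Psi}}$ is \emph{nonzero} at the MLE (it is a sum of terms weighted by $w_n^2$; only the piece weighted by $w_n$ vanishes because $\sum w_n\tilde{\bm{x}}_n=0$). Rather than splitting the determinant into blocks, the paper observes that across all blocks only three scalar ``normalization factors'' appear, namely $\sum w_n^2$, $\sum w_n$, and $N_m$; it then invokes the identity $\sum_{\bm{x}_n\in\mathcal{X}_m} w_n = N_m$ (a property of the $t_\nu$ MLE, citing Kent and Tyler) to reduce these to two, sets $\epsilon=\max(\sum w_n^2,\,N_m)$, and normalizes the \emph{entire} $q\times q$ matrix at once, claiming $|\epsilon^{-1}\hat{\bm{J}}_m|=\mathcal{O}(1)$. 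This yields $\log|\hat{\bm{J}}_m|=q\log\epsilon+\mathcal{O}(1)$ directly.

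Your assertion that ``the location block carries $\sum w_n^2$ while the scatter block carries $N_m$'' is not correct: both diagonal blocks contain terms scaling with $\sum w_n^2$ \emph{and} with $N_m$ (once $\sum w_n=N_m$ is used). Your argument can still be rescued, since $\log(\sum w_n^2)-\log N_m=\mathcal{O}(1)$ by the law of large numbers and hence either count can replace the other up to $\mathcal{O}(1)$; but as written the one-count-per-block attribution is wrong, and you would also need to argue that the nonzero off-diagonal block does not spoil the determinant at leading order. You also omit the identity $\sum w_n=N_m$, which is precisely the step the paper uses to collapse three normalization factors to two before introducing $\epsilon$.
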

\begin{proof}
Proving that Eq.~\eqref{eq:finalposterior} reduces to Eq.~\eqref{eq:BICt} requires approximating $|\hat{\bm{J}}_m|$ and, consequently, writing a closed-form expression for $\text{BIC}_{t_\nu}(M_l)$. A detailed proof is given in Appendix~\ref{app:B}. 
\end{proof}
Once $\text{{BIC}}_{t_\nu}(M_l)$ is computed for each candidate model $M_l\in\mathcal{M}$, the number of clusters in $\mathcal{X}$ is estimated as
\begin{equation}
 \hat{K}_{\text{{BIC}}_{t_\nu}} = \underset{l=L_\mathrm{min},\ldots,L_\mathrm{max}}{\arg \max}\text{{BIC}}_{t_\nu}(M_l).
 \label{eq:BICtK}
\end{equation}

\begin{corollary}
When the data size is finite, one can opt to compute $\log |\hat{\bm{J}}_m|$, without asymptotic approximations to obtain a more accurate penalty term. In such cases, the posterior probability of $M_l$ given $\mathcal{X}$ becomes
\begin{equation}
\boxed{
\text{\emph{BIC}}_{\mbox{\tiny \emph{F}}t_\nu}(M_l) \approx  \log\mathcal{L}(\hat{\bm{\Theta}}_l|\mathcal{X}) -\frac{1}{2}\sum_{m=1}^l\log |\hat{\bm{J}}_m|,
}
\label{eq:BICtF}
\end{equation}
where the expression for $|\hat{\bm{J}}_m|$ is given in Appendix~\ref{app:C}.
\end{corollary}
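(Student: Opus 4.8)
The plan is to derive Eq.~\eqref{eq:BICtF} directly from the general criterion in Eq.~\eqref{eq:finalposterior}, departing from the proof of the Theorem only at the point where the FIM determinant $|\hat{\bm{J}}_m|$ was asymptotically simplified. Whereas the Theorem replaces $-\frac{1}{2}\sum_m\log|\hat{\bm{J}}_m|$ by the closed-form penalty $-\frac{q}{2}\sum_m\log\epsilon$, here I would retain $-\frac{1}{2}\sum_m\log|\hat{\bm{J}}_m|$ unapproximated and instead evaluate the determinant exactly. Thus the corollary reduces to two tasks: (i) showing that the remaining terms of Eq.~\eqref{eq:finalposterior} may be dropped in a model-selection setting, and (ii) producing the closed form for $|\hat{\bm{J}}_m|$ that is deferred to Appendix~\ref{app:C}.

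For task (i), I would argue term by term. The evidence $\log f(\mathcal{X})$ does not depend on the candidate index $l$, so it is a common additive constant that leaves the maximizer in Eq.~\eqref{eq:BICtK} unchanged and can be discarded. The model prior $\log p(M_l)$, the parameter prior $\log f(\hat{\bm{\Theta}}_l|M_l)$, and the constant $\frac{lq}{2}\log 2\pi$ are all $\mathcal{O}(1)$ in the cluster sizes (or grow only linearly in $l$), and are therefore dominated by the data-fidelity term $\log\mathcal{L}(\hat{\bm{\Theta}}_l|\mathcal{X})$ and the penalty $-\frac{1}{2}\sum_m\log|\hat{\bm{J}}_m|$, both of which scale with the $N_m$. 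Neglecting them, exactly as in the Theorem, leaves precisely the right-hand side of Eq.~\eqref{eq:BICtF}.

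For task (ii), the substantive computation is the exact negative Hessian of the per-cluster $t_\nu$ log-likelihood with respect to $\bm{\theta}_m=[\bm{\mu}_m,\bm{\Psi}_m]^\top$, evaluated at the estimate $\hat{\bm{\theta}}_m$. Using matrix differential calculus and the $\mathrm{vec}(\cdot)$/$\otimes$ formalism to handle the matrix parameter $\bm{\Psi}_m$, I would partition $\hat{\bm{J}}_m$ into a location block $\hat{\bm{J}}_{\bm{\mu}\bm{\mu}}$, a scatter block $\hat{\bm{J}}_{\bm{\Psi}\bm{\Psi}}$, and cross blocks. Because the MLE of the location is the weight-normalized mean, the score equation $\sum_{\bm{x}_n\in\mathcal{X}_m}w_n(\bm{x}_n-\hat{\bm{\mu}}_m)=\bm{0}$ holds at $\hat{\bm{\theta}}_m$; I expect this to force the cross blocks to vanish, as for elliptically symmetric families where location and scatter are orthogonal parameters, so that $|\hat{\bm{J}}_m|=|\hat{\bm{J}}_{\bm{\mu}\bm{\mu}}|\cdot|\hat{\bm{J}}_{\bm{\Psi}\bm{\Psi}}|$ and the determinant factorizes.

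The main obstacle will be the scatter block. Differentiating $\log|\bm{\Psi}_m|$ and the term $\frac{\nu_m+r}{2}\log(1+\delta_n/\nu_m)$ twice with respect to $\mathrm{vec}(\bm{\Psi}_m)$ produces Kronecker-structured matrices whose determinant must be brought into closed form, while respecting the symmetry constraint that $\bm{\Psi}_m$ carries only $r(r+1)/2$ free entries, so that the total parameter count is $q=\frac{1}{2}r(r+3)$. Care is also needed because the $t_\nu$ weights $w_n$ depend on $\delta_n$, and hence on both $\bm{\mu}_m$ and $\bm{\Psi}_m$, generating additional derivative terms that are absent in the Gaussian case. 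Once the closed form of Appendix~\ref{app:C} is in hand, I would finally check consistency with the Theorem: substituting it into Eq.~\eqref{eq:BICtF} and taking the asymptotic limit should reproduce the $-\frac{q}{2}\sum_m\log\epsilon$ penalty of Eq.~\eqref{eq:BICt}, confirming that $\text{BIC}_{\mbox{\tiny F}t_\nu}$ converges to $\text{BIC}_{t_\nu}$ as the cluster sizes grow.
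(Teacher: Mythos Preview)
Your task (i) is fine and matches the paper: starting from Eq.~\eqref{eq:finalposterior}, invoke $\left(\mathcal{A.}3\right)$ to drop $\log p(M_l)$, $\log f(\hat{\bm{\Theta}}_l|M_l)$, and $\frac{lq}{2}\log 2\pi$, and discard the model-independent $\log f(\mathcal{X})$, leaving exactly Eq.~\eqref{eq:BICtF} with $|\hat{\bm{J}}_m|$ retained unapproximated.

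Task (ii) contains a genuine error. You expect the score identity $\sum_{\bm{x}_n\in\mathcal{X}_m}w_n(\bm{x}_n-\hat{\bm{\mu}}_m)=\bm{0}$ to force the cross blocks $\hat{\bm{J}}_{\bm{\mu}\bm{\Psi}}$ to vanish, yielding $|\hat{\bm{J}}_m|=|\hat{\bm{J}}_{\bm{\mu}\bm{\mu}}|\,|\hat{\bm{J}}_{\bm{\Psi}\bm{\Psi}}|$. This is not so for the \emph{observed} FIM of the $t_\nu$ model. Differentiating the $\bm{\mu}$-score with respect to $\bm{\Psi}_m$ produces two terms: one proportional to $\sum w_n\tilde{\bm{x}}_n$, which the score identity does kill, and another proportional to $\sum w_n^2(\hat{\bm{\Psi}}_m^{-1}\tilde{\bm{x}}_n\tilde{\bm{x}}_n^\top\hat{\bm{\Psi}}_m^{-1}\otimes\tilde{\bm{x}}_n^\top\hat{\bm{\Psi}}_m^{-1})$, a weighted third-moment-type quantity that survives because $w_n$ depends on $\delta_n$. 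The location--scatter orthogonality you are thinking of holds for the \emph{expected} FIM of elliptical families, not for the negative Hessian at a finite sample. In the Gaussian limit $w_n\equiv 1$ the surviving term reduces to an unweighted third moment and does vanish at the sample mean, which is presumably the source of your intuition.

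Accordingly, the paper does not factorize the determinant. Appendix~\ref{app:C} computes $|\hat{\bm{J}}_m|$ via the Schur complement of the $2\times 2$ block matrix, i.e.
\[
|\hat{\bm{J}}_m| = \big|-\hat{\bm{J}}_{\bm{\mu}\bm{\mu}^\top} + \hat{\bm{J}}_{\bm{\mu}\bm{\Psi}}\hat{\bm{J}}_{\bm{\Psi}\bm{\Psi}}^{-1}\hat{\bm{J}}_{\bm{\Psi}\bm{\mu}^\top}\big|\,\big|-\hat{\bm{J}}_{\bm{\Psi}\bm{\Psi}}\big|,
\]
with all three blocks $\hat{\bm{J}}_{\bm{\mu}\bm{\mu}^\top}$, $\hat{\bm{J}}_{\bm{\mu}\bm{\Psi}}$, $\hat{\bm{J}}_{\bm{\Psi}\bm{\Psi}}$ given explicitly in Appendix~\ref{app:B}. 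Your plan for the scatter block (vec/Kronecker calculus, duplication matrix for the symmetry constraint) is on the right track, but you must carry the nonzero cross block through to the Schur complement rather than discard it.
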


\subsection{The Expectation Maximization (EM) Algorithm for $t_\nu$ Mixture Models}
We consider maximum likelihood estimation of the parameters of the $l$-component mixture of $t_\nu$ distributions 
\begin{equation}
 f(\bm{x}_n|M_l,\bm{\Phi}_l) = \sum_{m=1}^l\tau_m g(\bm{x}_n;\bm{\mu}_m,\bm{\Psi}_m,\nu_m), 
 \label{eq:mix_t}
\end{equation}
where $g(\bm{x}_n;\bm{\mu}_m,\bm{\Psi}_m,\nu_m)$ denotes the $r$-variate $t_\nu$ pdf and $\bm{\Phi}_l=\left[\bm{\tau}_l,\bm{\Theta}_l^\top,\bm{\nu}_l\right]$. $\bm{\tau}_l=\left[\tau_1,\ldots,\tau_l\right]^\top$ are the mixing coefficients and $\bm{\nu}_l=\left[\nu_1,\ldots,\nu_l\right]^\top$ are assumed to be known or estimated, e.g. using \cite{mclachlan2000}. The mixing coefficients satisfy the constraints $0<\tau_m<1$ for $m=1,\ldots,l$, and $\sum_{m=1}^l\tau_m=1$. 

The EM algorithm is widely used to estimate the parameters of the $l$-component mixture of $t_\nu$ distributions \cite{mclachlan2000,mclachlan1998,kotz2004,nadarajah2008}. The EM algorithm contains two basic steps, namely the E step and the M step, which are performed iteratively until a convergence condition is satisfied. The E step computes
\begin{align}
 \hat{\upsilon}_{nm}^{(i)} &= \frac{\hat{\tau}_m^{(i-1)}g(\bm{x}_n;\bm{\mu}_m^{(i-1)},\bm{\Psi}_m^{(i-1)},\nu_m)}{\sum_{j=1}^l\hat{\tau}_m^{(i-1)}g(\bm{x}_n;\bm{\mu}_j^{(i-1)},\bm{\Psi}_j^{(i-1)},\nu_j)} \label{eq:upsilon}\\
 \hat{w}_{nm}^{(i)} &= \frac{\nu_m+r}{\nu_m +\delta_n^{(i-1)}},\label{eq:w_nm}
\end{align}
where $\hat{\upsilon}_{nm}^{(i)}$ is the posterior probability that $\bm{x}_n$ belongs to the $m$th cluster at the $i$th iteration and $\hat{w}_{nm}^{(i)}$ is the weight given to $\bm{x}_n$ by the $m$th cluster at the $i$th iteration. Once $\hat{\upsilon}_{nm}^{(i)}$ and $\hat{w}_{nm}^{(i)}$ are calculated, the M step updates cluster parameters as follows:
\begin{align}
 \hat{\tau}_m^{(i)} &= \frac{\sum_{n=1}^N\hat{\upsilon}_{nm}^{(i)}}{N}\label{eq:tauhat} \\
 \hat{\mu}_m^{(i)} &= \frac{\sum_{n=1}^N\hat{\upsilon}_{nm}^{(i)}w_{nm}^{(i)}\bm{x}_n}{\sum_{n=1}^N\hat{\upsilon}_{nm}^{(i)}w_{nm}^{(i)}}\label{eq:muhat} \\
 \hat{\bm{\Psi}}_m^{(i)} &= \frac{\sum_{n=1}^N\hat{\upsilon}_{nm}^{(i)}w_{nm}^{(i)}(\bm{x}_n-\hat{\bm{\mu}}_m^{(i)})(\bm{x}_n-\hat{\bm{\mu}}_m^{(i)})^\top}{\sum_{n=1}^N\hat{\upsilon}_{nm}^{(i)}}\label{eq:Psihat}
\end{align}
Algorithm~\ref{alg:BICt} summarizes the working principle of the proposed robust two-step cluster enumeration approach. Given that the degree of freedom parameter $\nu$ is fixed at some finite value, the computational complexity of Algorithm~\ref{alg:BICt} is the sum of the run times of the two steps. Since the initialization, i.e., the K-medians algorithm is performed only for a few iterations, the computational complexity of the first step is dominated by the EM algorithm and it is given by $\mathcal{O}(Nr^2li_{\mathrm{max}})$ for a single candidate model $M_l$, where $i_{\mathrm{max}}$ is a fixed stopping threshold of the EM algorithm. The computational complexity of $\text{BIC}_{t_\nu}(M_l)$ is $\mathcal{O}(Nr^2)$, which is much smaller than the run-time of the EM algorithm and, as a result, it can easily be ignored in the run-time analysis of the proposed algorithm. 
Hence, the total computational complexity of Algorithm~\ref{alg:BICt} is $\mathcal{O}(Nr^2(L_{\mathrm{min}}+\ldots+L_{\mathrm{max}})i_{\mathrm{max}})$.

Note that if $\text{BIC}_{\mbox{\tiny F}t_\nu}(M_l)$ is used in Algorithm~\ref{alg:BICt} instead of $\text{BIC}_{t_\nu}(M_l)$, the computational complexity of the algorithm increases significantly with the increase in the number of features $(r)$ due to the calculation of Eq.~\eqref{eq:est_J}.  
\begin{algorithm}[htb]
 \caption{Robust two-step cluster enumeration approach}
 \begin{algorithmic}
  \State \textit{Inputs:} $\mathcal{X}$, $L_\mathrm{min}$, $L_\mathrm{max}$, and $\nu$
  \For {$l=L_\mathrm{min},\ldots,L_\mathrm{max}$}
  \State {\it Step 1:} model-based clustering
  \State {\it Step 1.1:} the EM algorithm
  \For {$m=1,\ldots,l$}
  \State Initialize $\hat{\bm{\mu}}_m^{0}$ using the K-medians algorithm
  \State Initialize $\hat{\bm{\Psi}}_m^{0}$ using the sample covariance estimator
  \State $\hat{\tau}_m^{0} = \frac{N_m}{N}$
 \EndFor
 \For {$i=1,2,\ldots,i_{\mathrm{max}}$}
 \State \textit{E step:}
 \For {$n=1,\ldots,N$}
 \For {$m=1,\ldots,l$}
 \State Calculate $\hat{\upsilon}_{nm}^{(i)}$ using Eq.~\eqref{eq:upsilon}
 \State Calculate $\hat{w}_{nm}^{(i)}$ using Eq.~\eqref{eq:w_nm}
 \EndFor
 \EndFor
 \State \textit{M step:}
 \For {$m=1,\ldots,l$}
 \State Determine $\hat{\tau}_m^{(i)}$, $\hat{\bm{\mu}}_m^{(i)}$, and $\hat{\bm{\Psi}}_m^{(i)}$ via Eqs.~(\ref{eq:tauhat})-(\ref{eq:Psihat})
 \EndFor
 \State Check for the convergence of either $\hat{\bm{\Phi}}_l^{(i)}$ or $\log \mathcal{L}(\hat{\bm{\Phi}}_l^{(i)}|\mathcal{X})$
 \If {convergence condition is satisfied}
 \State Exit for loop
 \EndIf
 \EndFor
 \State \textit{Step 1.2:} hard clustering
 \For {$n=1,\ldots,N$}
 \For {$m=1,\ldots,l$}
  \begin{equation*}
  \iota_{nm} = \begin{cases}
                               1, & m=\underset{j=1,\ldots,l}{\arg\max}\!\!\!\!\quad\hat{\upsilon}_{nj}^{(i)} \\
                               0, & \text{otherwise}
                      \end{cases}
 \end{equation*}
 \EndFor
 \EndFor
 \For {$m=1,\ldots,l$}
 \State $N_m = \sum_{n=1}^N\iota_{nm}$ 
 \EndFor
 \State {\it Step 2:} calculate $\text{BIC}_{t_\nu}(M_l)$ using Eq.~\eqref{eq:BICt}
 \EndFor
 \State Estimate the number of clusters, $\hat{K}_{\text{BIC}_{t_\nu}}$, in $\mathcal{X}$ via Eq.~\eqref{eq:BICtK}
 \end{algorithmic}
 \label{alg:BICt}
\end{algorithm}

\section{Comparison of Different Bayesian Cluster Enumeration Criteria}
\label{sec:comp}
Model selection criteria that are derived by maximizing the posterior probability of candidate models given data are known to have a common form \cite{stoica2004,rao1989,teklehaymanot22017} that is consistent with 
\begin{equation}
 \log \mathcal{L}(\hat{\bm{\Theta}}_l|\mathcal{X}) - \eta,
 \label{eq:com_form_BIC}
\end{equation}
where $\log \mathcal{L}(\hat{\bm{\Theta}}_l|\mathcal{X})$ is the data fidelity term and $\eta$ is the penalty term. The proposed robust cluster enumeration criteria, $\text{BIC}_{t_\nu}$ and $\text{BIC}_{\mbox{\tiny F}t_\nu}$, and the original BIC with multivariate $t_\nu$ candidate models, $\text{BIC}_{\mbox{\tiny O}t_\nu}$, \cite{andrews2012,mcnicholas2012} have an identical data fidelity term. The difference in these criteria lies in their penalty terms, which are given by
\begin{align}
 \text{BIC}_{t_\nu}:& \quad \eta = \frac{q}{2}\sum_{m=1}^l\log \epsilon \label{eq:pen_t} \\
 \text{BIC}_{\mbox{\tiny F}t_\nu}:& \quad \eta = \frac{1}{2}\sum_{m=1}^l\log|\hat{\bm{J}}_m|\label{eq:pen_Ft} \\
 \text{BIC}_{\mbox{\tiny O}t_\nu}:& \quad \eta = \frac{ql}{2}\log N \label{eq:pen_Ot}, 
\end{align}
where $\epsilon$ and $|\hat{\bm{J}}_m|$ are given by Eq.~\eqref{eq:epsilon} and Eq.~\eqref{eq:est_J}, respectively. Note that $\text{BIC}_{\mbox{\tiny F}t_\nu}$ calculates an exact value of the penalty term, while $\text{BIC}_{t_\nu}$ and $\text{BIC}_{\mbox{\tiny O}t_\nu}$ compute their asymptotic approximation. In the finite sample regime, the penalty term of $\text{BIC}_{\mbox{\tiny F}t_\nu}$ is stronger than the penalty term of $\text{BIC}_{t_\nu}$, while asymptotically all three criteria have an identical penalty term.  
\begin{remark}
 When the degree of freedom parameter $\nu\rightarrow\infty$, $\text{\emph{BIC}}_{t_\nu}$ converges to 
 \begin{align}
  \text{\emph{BIC}}_{\mbox{\tiny \emph{N}}}(M_l) & \approx \sum_{m=1}^l N_m\log N_m - \sum_{m=1}^l\frac{N_m}{2}\log |\hat{\bm{\Sigma}}_m| \nonumber \\
  & - \frac{q}{2}\sum_{m=1}^l\log N_m,
  \label{eq:BICN}
 \end{align}
where $\text{\emph{BIC}}_{\mbox{\tiny \emph{N}}}$ is the asymptotic criterion derived in \cite{teklehaymanot22017} assuming a family of Gaussian candidate models. $\hat{\bm{\Sigma}}_m$ is an estimate of the covariance matrix of the $m$th cluster.  
\end{remark}
\begin{remark}
 A modification of the data distribution of the candidate models only affects the data fidelity term of the original BIC \cite{schwarz1978,cavanaugh1999}. However, given that the BIC is specifically derived for cluster analysis, we showed that both the data fidelity and penalty terms change as the data distribution of the candidate models changes, see Eq.~\eqref{eq:BICt} and Eq.~\eqref{eq:BICN}.  
\end{remark}
A related robust cluster enumeration method that uses the original BIC to estimate the number of clusters is the trimmed BIC (TBIC) \cite{neykov2007}. The TBIC estimates the number of clusters using Gaussian candidate models after trimming some percentage of the data. In TBIC, the fast trimmed likelihood estimator (FAST-TLE) is used to obtain maximum likelihood estimates of cluster parameters. The FAST-TLE is computationally expensive since it carries out a trial and a refinement step multiple times, see \cite{neykov2007} for details.

\section{Experimental Results}
\label{sec:results}
In this section, we compare the performance of the proposed robust two-step algorithm with state-of-the-art cluster enumeration methods using numerical and real data experiments. In addition to the methods discussed in Section \ref{sec:comp}, we compare our cluster enumeration algorithm with the gravitational clustering (GC) \cite{binder2017} and the X-means \cite{pelleg2000} algorithm. All experimental results are an average of $300$ Monte Carlo runs. The degree of freedom parameter is set to $\nu=3$ for all methods that have multivariate $t_\nu$ candidate models. We use the author's implementation of the gravitational clustering algorithm \cite{binder2017}. For the TBIC, we trim $10\%$ of the data and perform $10$ iterations of the trial and refinement steps. For the model selection based methods, the minimum and maximum number of clusters is set to $L_{\mathrm{min}}=1$ and $L_{\mathrm{max}}=2K$, where $K$ denotes the true number of clusters in the data under consideration.  

\subsection{Performance Measures}
Performance is assessed in terms of the empirical probability of detection $(p_{\text{det}})$ and the mean absolute error (MAE), which are defined as 
\begin{align}
 p_{\text{det}} &=\frac{1}{I} \sum_{i=1}^I\mathbbm{1}_{\{\hat{K}_i=K\}} \\
 \textrm{MAE} &= \frac{1}{I}\sum_{i=1}^I|K-\hat{K}_i|,
\end{align}
where $I$ is the total number of Monte Carlo experiments, $\hat{K}_i$ is the estimated number of clusters in the $i$th Monte Carlo experiment, and $\mathbbm{1}_{\{\hat{K}_i=K\}}$ is the indicator function defined as
\begin{equation}
 \mathbbm{1}_{\{\hat{K}_i=K\}} \triangleq 
 \begin{cases}
  1, \quad \textrm{if} \quad \hat{K}_i=K \\
  0, \quad \text{otherwise}
 \end{cases}.
\end{equation} 

In addition to these two performance measures, we also report the empirical probability of underestimation and overestimation, which are defined as 
\begin{align}
 p_{\text{under}} &=\frac{1}{I} \sum_{i=1}^I\mathbbm{1}_{\{\hat{K}_i<K\}} \\
 p_{\text{over}} &=1 - p_{\text{det}} - p_{\text{under}},
\end{align}
respectively. 

\subsection{Numerical Experiments}
\label{subsec:numexp}
\subsubsection{Analysis of the sensitivity of different cluster enumeration methods to outliers}
we generate two data sets which contain realizations of $2$-dimensional random variables $\bm{x}_k\sim\mathcal{N}\left(\bm{\mu}_k,\bm{\Sigma}_k\right)$, where $k=1,2,3$, with cluster centroids $\bm{\mu}_1=[0,5]^\top$, $\bm{\mu}_2=[5,0]^\top$, $\bm{\mu}_3=[-5,0]^\top$, and covariance matrices
\[
\bm{\Sigma}_1= 
 \begin{bmatrix}
  2 & 0.5 \\
  0.5 & 0.5
 \end{bmatrix}\!,
 \bm{\Sigma}_{2}= 
 \begin{bmatrix}
  1 & 0 \\
  0 & 0.1
 \end{bmatrix}\!,
 \bm{\Sigma}_3= 
 \begin{bmatrix}
  2 & -0.5 \\
  -0.5 & 0.5
 \end{bmatrix}\!.
\]
The first data set (Data-1), as depicted in Fig.~\ref{fig:Data-1-1}, replaces a randomly selected data point with an outlier that is generated from a uniform distribution over the range $\left[-20, 20\right]$ on each variate at each iteration. The sensitivity of different cluster enumeration methods to a single replacement outlier over $100$ iterations as a function of the number of data vectors per cluster $(N_k)$ is displayed in Table~\ref{tab:sensitivity}. Among the compared methods, our robust criterion $\text{BIC}_{\mbox{\tiny F}t_3}$ has the best performance in terms of both $p_{\text{det}}$ and MAE. Except for $\text{BIC}_{\mbox{\tiny F}t_3}$ and the TBIC, the performance of all methods deteriorates when $N_k$, for $k=1,2,3$, is small and, notably, $\text{BIC}_{t_3}$ performs poorly. This behavior is attributed to the fact that $\text{BIC}_{t_3}$ is an asymptotic criterion and in the small sample regime its penalty term becomes weak which results in an increase in the empirical probability of overestimation. An illustrative example of the sensitivity of $\text{BIC}_{\mbox{\tiny F}t_3}$ and $\text{BIC}_{\mbox{\tiny N}}$ to the presence of an outlier is displayed in Fig.~\ref{fig:BIC_sensitivity}. Despite the difference in $N_k$, when the outlier is either in one of the clusters or very close to one of the clusters, both $\text{BIC}_{\mbox{\tiny F}t_3}$ and $\text{BIC}_{\mbox{\tiny N}}$ are able to estimate the correct number of clusters reasonably well. The difference between these two methods arises when the outlier is far away from the bulk of data. While $\text{BIC}_{\mbox{\tiny F}t_3}$ is still able to estimate the correct number of clusters, $\text{BIC}_{\mbox{\tiny N}}$ starts to overestimate the number of clusters.   
\begin{figure}[tb]
 \centering
  \begin{subfigure}[b]{0.45\linewidth}
  \centering
  \includegraphics[width=\linewidth]{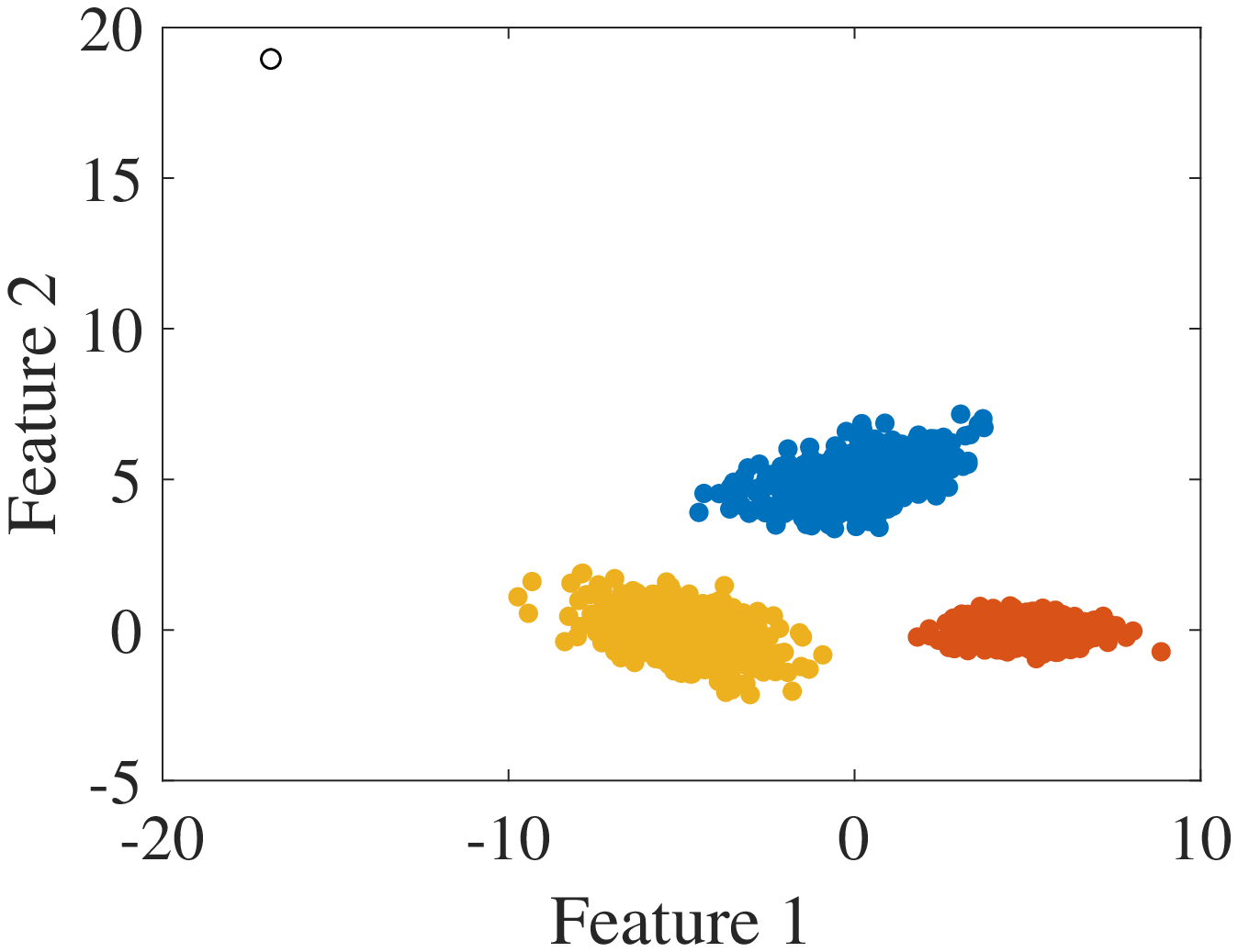}
  \caption{Data-1}
  \label{fig:Data-1-1}
 \end{subfigure}%
   \begin{subfigure}[b]{0.45\linewidth}
  \centering
  \includegraphics[width=\linewidth]{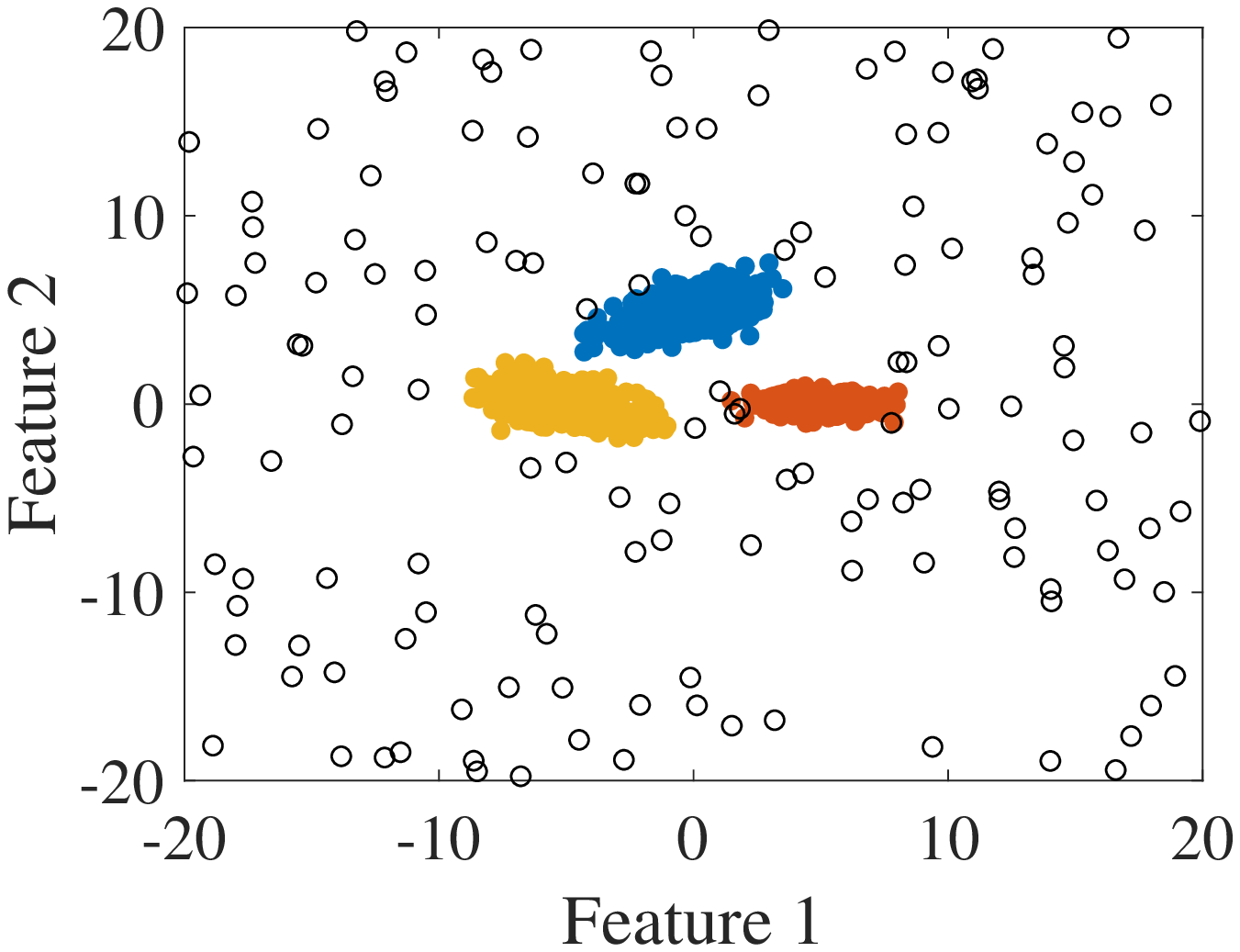}
  \caption{Data-2 with $\alpha=10\%$}
  \label{fig:Data-2-2}
 \end{subfigure}
\caption{Data-1 and Data-2 with $\alpha=10\%$, where filled circles represent clean data and an open circle denotes an outlier.}
\end{figure}

\begin{table}[tb]
 \centering
 \caption{The sensitivity of different cluster enumeration methods to the presence of a single replacement outlier as a function of the number of data points per cluster.}
 \begin{tabular}{cccccc}
  \toprule
   $N_k$ & & $50$ & $100$ & $250$ & $500$ \\
  \midrule
  \multirow{2}{*}{$\text{BIC}_{t_3}$} & $p_{\text{det}}$  & $43.20$ & $92.18$ & $99.77$ & $\bm{100}$ \\
  & MAE & $1.28$ & $0.11$ & $0.002$ & $\bm{0}$ \\
  \cmidrule{2-6}
  \multirow{2}{*}{$\text{BIC}_{\mbox{\tiny F}t_3}$} & $p_{\text{det}}$   & $98.66$ & $\bm{99.746}$ & $\bm{100}$ & $\bm{100}$ \\
  & MAE & $0.0277$ & $\bm{0.0062}$ & $\bm{0}$ & $\bm{0}$ \\
  \midrule
  \multirow{2}{*}{$\text{BIC}_{\mbox{\tiny O}t_3}$ \cite{andrews2012,mcnicholas2012}}  & $p_{\text{det}}$  & $88.13$ & $99.5$ & $99.98$ & $\bm{100}$ \\
  & MAE & $0.18$ & $0.005$ & $0.0002$ & $\bm{0}$ \\
  \cmidrule{2-6}
  \multirow{2}{*}{$\text{TBIC}$ \cite{neykov2007}} & $p_{\text{det}}$  & $\bm{98.75}$ & $99.26$ & $98.92$ & $98.8$ \\
  & MAE & $\bm{0.013}$ & $0.008$ & $0.01$ & $0.01$ \\
  \cmidrule{2-6}
  \multirow{2}{*}{GC \cite{binder2017}} & $p_{\text{det}}$ &  $73.07$ & $94.85$ & $99.80$ & $\bm{100}$ \\
  & MAE & $0.29$ & $0.05$ & $0.002$ & $\bm{0}$ \\
  \midrule
  \multirow{2}{*}{$\text{BIC}_{\mbox{\tiny N}}$ \cite{teklehaymanot22017}} & $p_{\text{det}}$  & $10.92$ & $15.60$ & $33.82$ & $42.20$ \\
  & MAE & $1.25$ & $1.13$ & $0.99$ & $0.83$ \\
  \cmidrule{2-6}
  \multirow{2}{*}{X-means \cite{pelleg2000}} & $p_{\text{det}}$  & $1.24$ & $1.17$ & $1.38$ & $0.17$ \\
  & MAE & $2.69$ & $2.67$ & $2.33$ & $2.13$ \\
  \bottomrule
 \end{tabular}
\label{tab:sensitivity}
\end{table}

\begin{figure}[tb]
 \centering
 \begin{subfigure}[h]{0.45\linewidth}
 \centering
  \includegraphics[width=\linewidth]{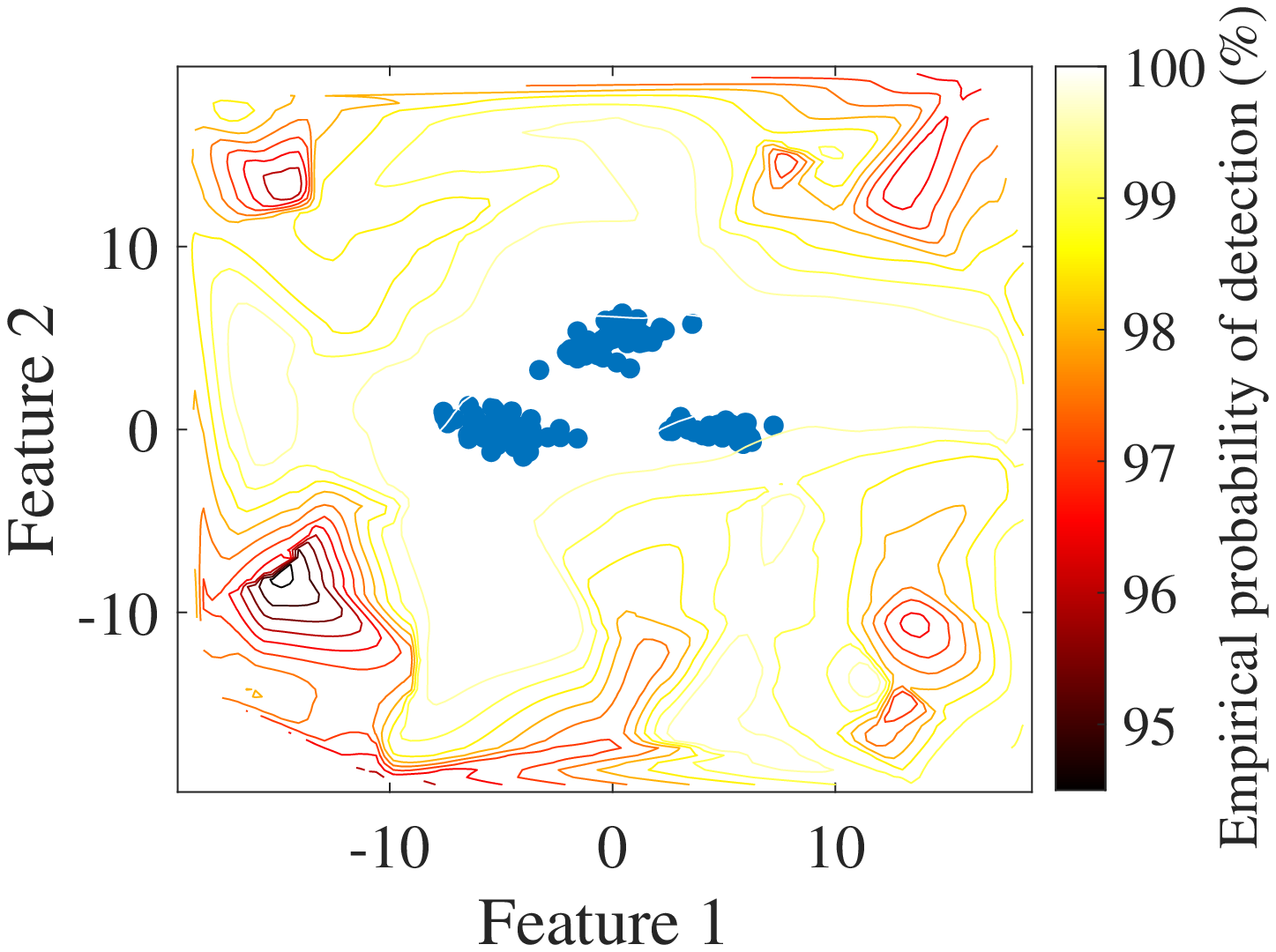}
  \caption{$\text{BIC}_{\mbox{\tiny F}t_3}$ for $N_k=50$}
 \end{subfigure}%
   \begin{subfigure}[h]{0.45\linewidth}
  \centering
  \includegraphics[width=\linewidth]{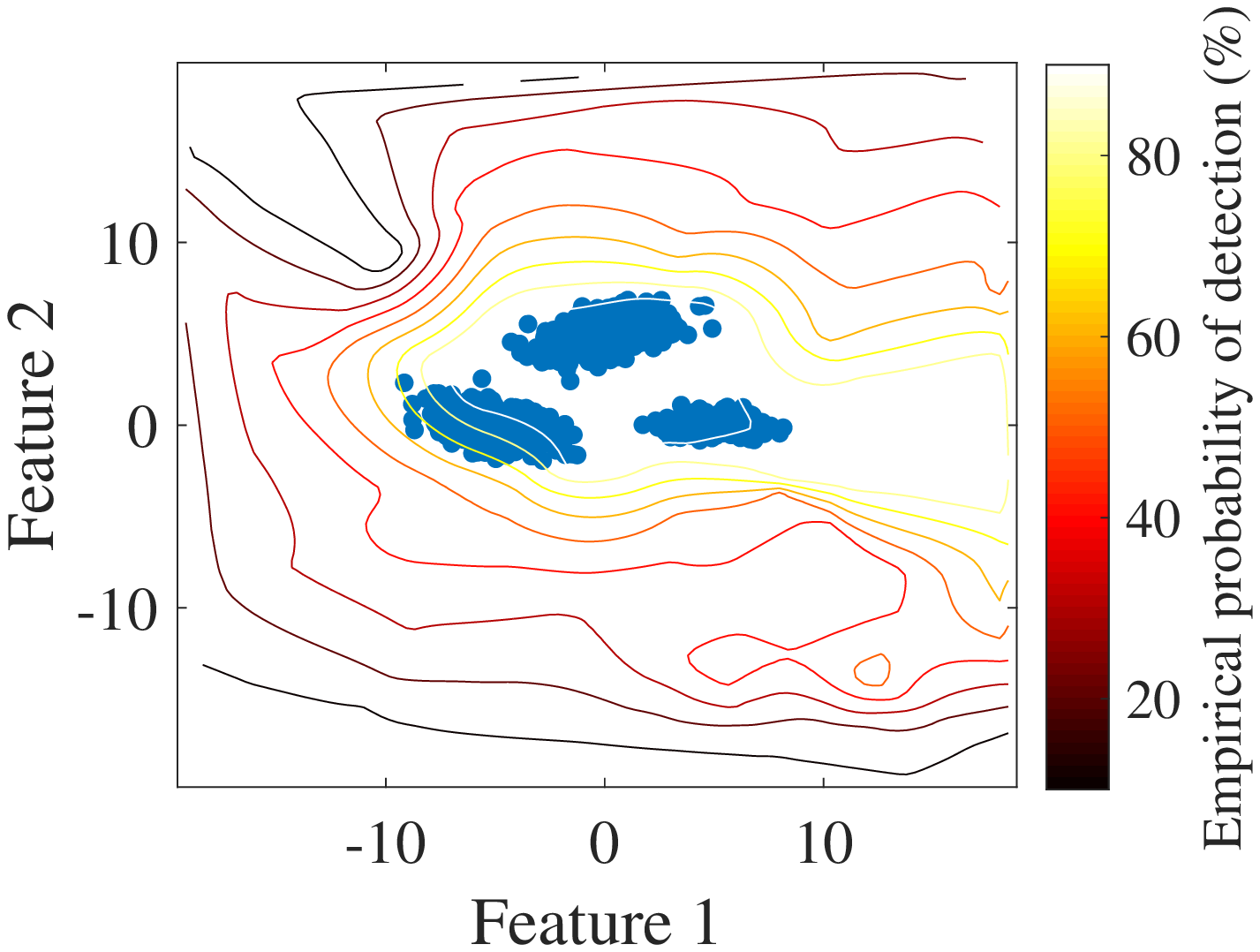}
  \caption{$\text{BIC}_{\mbox{\tiny N}}$ for $N_k=500$}
 \end{subfigure}
 \caption{Sensitivity curves of $\text{BIC}_{\mbox{\tiny F}t_3}$ and $\text{BIC}_{\mbox{\tiny N}}$ at different values of $N_k$. The sensitivity curve demonstrates the sensitivity of a method to the presence of an outlier relative to its position. }
 \label{fig:BIC_sensitivity}
\end{figure}

The second data set (Data-2), shown in Fig.~\ref{fig:Data-2-2}, contains $N_k=500$ data points in each cluster $k$ and replaces a certain percentage of the data set with outliers that are generated from a uniform distribution over the range $\left[-20, 20\right]$ on each variate. Data-2 is generated in a way that no outlier lies inside one of the data clusters. In this manner, we make sure that outliers are points that do not belong to the bulk of data. Fig.~\ref{fig:multiple_outliers} shows the empirical probability of detection as a function of the percentage of outliers $(\alpha)$. GC is able to correctly estimate the number of clusters for $\alpha>3\%$. The proposed robust criteria, $\text{BIC}_{t_3}$ and $\text{BIC}_{\mbox{\tiny F}t_3}$, and the original BIC, $\text{BIC}_{\mbox{\tiny O}t_3}$, behave similarly and are able to estimate the correct number of clusters when $\alpha\leq3\%$. The behavior of these methods is rather intuitive because, as the amount of outliers increases, then the methods try to explain the outliers by opening a new cluster. A similar trend is observed for the TBIC even though its curve decays slowly. $\text{BIC}_{\mbox{\tiny N}}$ is able to estimate the correct number of clusters $99\%$ of the time when there are no outliers in the data set. However, even $1\%$ of outliers is enough to drive $\text{BIC}_{\mbox{\tiny N}}$ into overestimating the number of clusters. 
\begin{figure}[tb]
 \centering
 \includegraphics[width=0.85\linewidth]{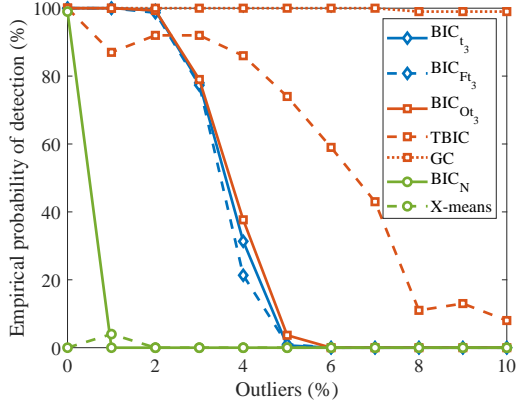}
 \caption{The empirical probability of detection in $\%$ for Data-2 as a function of the percentage of outliers.}
 \label{fig:multiple_outliers}
\end{figure}

\subsubsection{Impact of the increase in the number of features on the performance of cluster enumeration methods}
we generate realizations of the random variables $\bm{x}_k\sim t_3\left(\bm{\mu}_k,\bm{\Psi}_k\right)$, for $k=1,2$, whose cluster centroids and scatter matrices are given by $\bm{\mu}_k = c\bm{1}_{r\times 1}$ and $\bm{\Psi}_k = \bm{I}_r$, with $c\in\{0, 15\}$. For this data set, referred to as Data-3, the number of features $r$ is varied in the range $r=2,3,\ldots,55$ and the number of data points per cluster is set to $N_k=500$. Because $\nu=3$, Data-3 contains realizations of heavy tailed distributions and, as a result, the clusters contain outliers. The empirical probability of detection as a function of the number of features is displayed in Fig.~\ref{fig:dimension}. The performance of GC appears to be invariant to the increase in the number of features, while the remaining methods are affected. But, compared to the other cluster enumeration methods, GC is computationally very expensive. $\text{BIC}_{\mbox{\tiny O}t_3}$ outperforms $\text{BIC}_{t_3}$ and the TBIC when the number of features is low, while the proposed criterion $\text{BIC}_{t_3}$ outperforms both methods in high dimensions. $\text{BIC}_{\mbox{\tiny F}t_3}$ is not computed for this data set because it is computationally expensive and it is not beneficial, given the large number of samples.  
\begin{figure}[tb]
 \centering
 \includegraphics[width=0.85\linewidth]{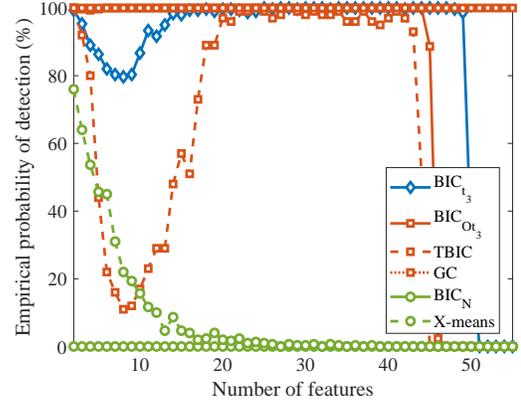}
 \caption{The empirical probability of detection in \% for Data-3 as a function of the number of features.}
 \label{fig:dimension}
\end{figure}

\subsubsection{Analysis of the sensitivity of different cluster enumeration methods to cluster overlap}
here, we use Data-2 with $1\%$ outliers and vary the distance between the second and the third centroid such that the percentage of overlap between the two clusters takes on a value from the set $\{0,5,10,25,50,75,100\}$. The empirical probability of detection as a function of the amount of overlap is depicted in Fig.~\ref{fig:overlap}. The best performance is achieved by $\text{BIC}_{t_3}$ and $\text{BIC}_{\mbox{\tiny O}t_3}$ and, remarkably, both cluster enumeration criteria are able to correctly estimate the number of clusters even when there exists $75\%$ overlap between the two clusters. As expected, when the amount of overlap is $100\%$, most methods underestimate the number of clusters to two. While it may appear that the enumeration performance of $\text{BIC}_{\mbox{\tiny N}}$ increases for increasing amounts of overlap, in fact  $\text{BIC}_{\mbox{\tiny N}}$ groups the two overlapping clusters into one and attempts to explain the outliers by opening a new cluster. A similar trend is observed for X-means. GC is inferior to the other robust methods, and experiences an increase in the empirical probability of underestimation. 
\begin{figure}[tb]
 \centering
 \includegraphics[width=0.85\linewidth]{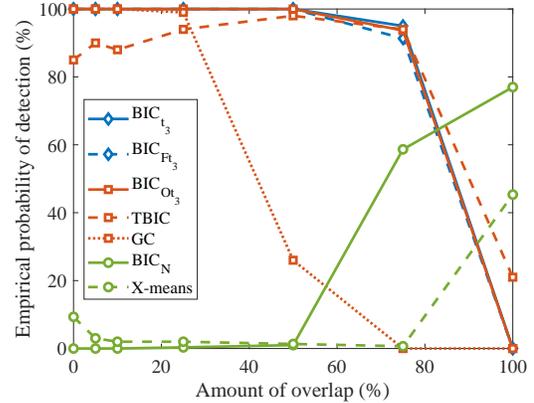}
 \caption{Impact of cluster overlap on the performance of different cluster enumeration methods.}
 \label{fig:overlap}
\end{figure}

\subsubsection{Analysis of the sensitivity of cluster enumeration methods to cluster heterogeneity}
we generate realizations of $2$-dimensional random variables $\bm{x}_k\sim t_3\left(\bm{\mu}_k,\bm{\Psi}_k\right)$, where the cluster centroids $\bm{\mu}_k$ are selected at random from a uniform distribution in the range $\left[-200,200\right]$ in each variate and the scatter matrices are set to $\bm{\Psi}_k=\bm{I}_r$ for $k=1,\ldots,5$. The data set is generated in a way that there is no overlap between the clusters. The number of data points in the first four clusters is set to $N_k=500$, while $N_5$ is allowed to take on values from the set $\{500,375,250,125,50,25,5\}$. This data set (Data-4) contains multiple outliers since each cluster contains realizations of heavy tailed $t$ distributed random variables. The empirical probability of detection as a function of the number of data points in the fifth cluster is shown in Fig.~\ref{fig:hetero}. The proposed cluster enumeration methods, $\text{BIC}_{t_3}$ and $\text{BIC}_{\mbox{\tiny F}t_3}$, are able to estimate the correct number of clusters with a high accuracy even when the fifth cluster contains only $1\%$ of the data available in the other clusters. A similar performance is observed for $\text{BIC}_{\mbox{\tiny O}t_3}$. TBIC and GC are slightly inferior in performance to the other robust cluster enumeration methods. 
When the number of data points in the fifth cluster increases, all robust methods perform well in estimating the number of clusters. Interestingly, X-means outperforms $\text{BIC}_{\mbox{\tiny N}}$ since the considered clusters are all spherical. $\text{BIC}_{\mbox{\tiny N}}$ overestimates the number of clusters and possesses the largest MAE.    
\begin{figure}[tb]
 \centering
 \includegraphics[width=0.85\linewidth]{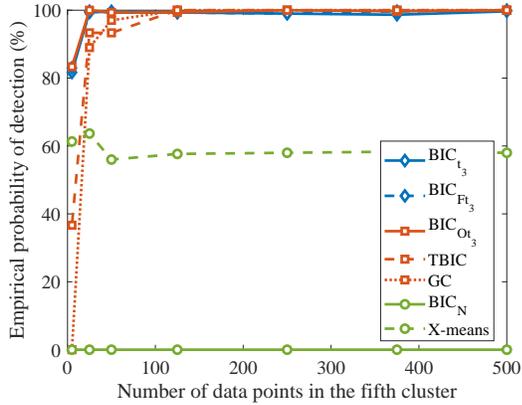}
 \caption{Impact of cluster heterogeneity on the performance of different cluster enumeration methods.}
 \label{fig:hetero}
\end{figure}

\subsection{Real Data Results}
\subsubsection{Old Faithful geyser data set}
\label{subsec:realdatares}
Old Faithful is a geyser located in Yellowstone National Park in Wyoming, United States. This data set, depicted in Fig.~\ref{fig:old_faithful}, was used in the literature for density estimation \cite{izenman2008}, time series analysis \cite{azzalini1990}, and cluster analysis \cite{bishop2006,hennig2003}. The performance of different cluster enumeration methods on the clean and contaminated versions of the Old Faithful data set is reported in Table~\ref{tab:per_real}. The contaminated version, shown in Fig.~\ref{fig:old_faithful_contaminated}, is generated by replacing a randomly selected data point with an outlier at each iteration similar to the way Data-1 was generated. Most methods are able to estimate the correct number of clusters $100\%$ of the time for the clean version of the Old Faithful data set. Our criteria,  $\text{BIC}_{t_3}$ and $\text{BIC}_{\mbox{\tiny F}t_3}$, and $\text{BIC}_{\mbox{\tiny O}t_3}$ are insensitive to the presence of a single replacement outlier, while TBIC exhibits slight sensitivity. In the presence of an outlier, the performance of $\text{BIC}_{\mbox{\tiny N}}$ deteriorates due to an increase in the empirical probability of overestimation. In fact, $\text{BIC}_{\mbox{\tiny N}}$ finds $3$ clusters $100\%$ of the time. GC shows the worst performance and possesses the highest MAE. 

Next, we replace a certain percentage of the Old Faithful data set with outliers and study the performance of different cluster enumeration methods. The outliers are generated from a uniform distribution over the range $[-20, 20]$ on each variate. The empirical probability of detection as a function of the percentage of replacement outliers is depicted in Fig.~\ref{fig:outliers_OF}. Although $\text{BIC}_{\mbox{\tiny F}t_3}$, $\text{BIC}_{t_3}$, $\text{BIC}_{\mbox{\tiny O}t_3}$, and TBIC are able to estimate the correct number of clusters reasonably well for clean data, their performance deteriorates quickly as the percentage of outliers increases. $\text{BIC}_{\mbox{\tiny N}}$, X-means, and GC overestimate the number of clusters for $100\%$ of the cases. 
\begin{figure}[tb]
 \centering
 \begin{subfigure}[tb]{0.45\linewidth}
  \centering
  \includegraphics[width=\linewidth]{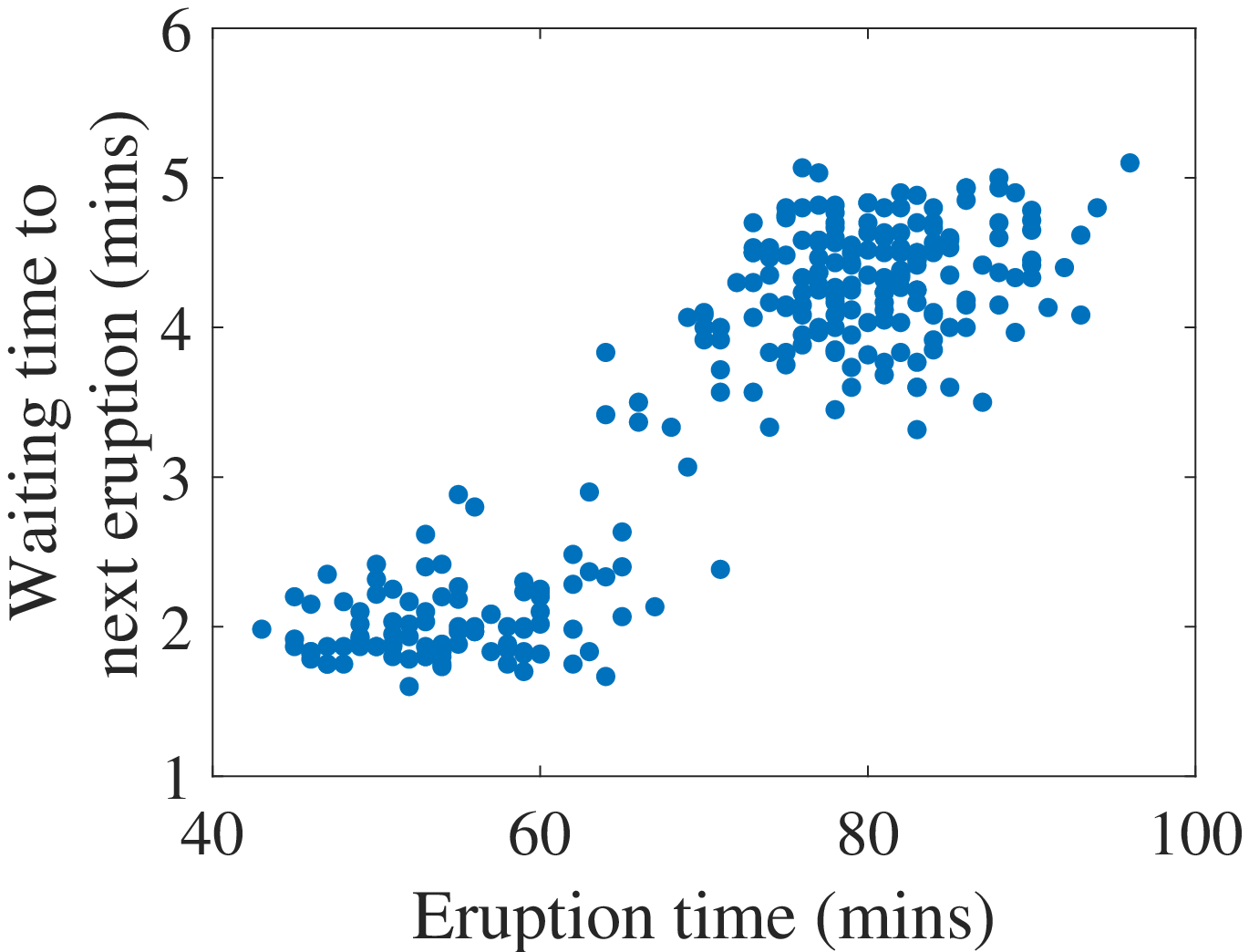}
  \caption{Clean data}
  \label{fig:old_faithful}
 \end{subfigure}%
 \begin{subfigure}[tb]{0.45\linewidth}
  \centering
  \includegraphics[width=\linewidth]{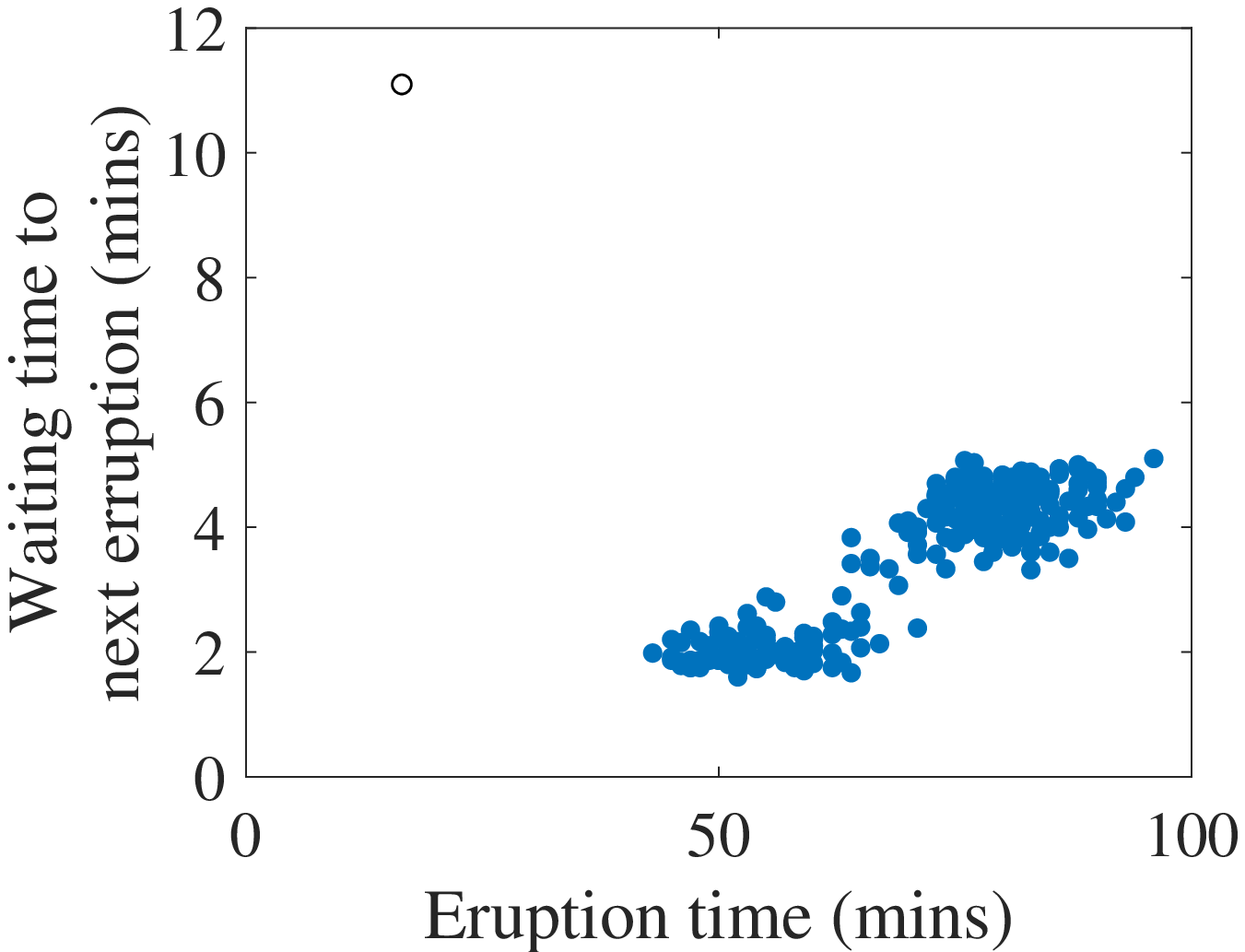}
  \caption{Contaminated data}
  \label{fig:old_faithful_contaminated}
 \end{subfigure}
 \caption{Clean and contaminated versions of the Old Faithful geyser data set.}
\end{figure}

\begin{table}[tb]
 \centering
 \caption{The performance of different cluster enumeration methods on a clean and a contaminated version of the Old Faithful data set.}
 \begin{tabular}{cccc}
  \toprule
    & & Old Faithful & \makecell{Old Faithful with\\a single outlier} \\
  \midrule
  \multirow{2}{*}{$\text{BIC}_{t_3}$} &$p_{\text{det}}$ & $\bm{100}$ & $\bm{100}$  \\
  & MAE & $\bm{0}$ & $\bm{0}$ \\
  \cmidrule{2-4}
  \multirow{2}{*}{$\text{BIC}_{\mbox{\tiny F}t_3}$} &$p_{\text{det}}$ & $\bm{100}$ & $\bm{100}$  \\
  & MAE & $\bm{0}$ & $\bm{0}$  \\
  \midrule
  \multirow{2}{*}{$\text{BIC}_{\mbox{\tiny O}t_3}$ \cite{andrews2012,mcnicholas2012}} & $p_{\text{det}}$  & $\bm{100}$ & $\bm{100}$ \\
  & MAE & $\bm{0}$ & $\bm{0}$ \\
  \cmidrule{2-4}
  \multirow{2}{*}{$\text{TBIC}$ \cite{neykov2007}} & $p_{\text{det}}$  & $\bm{100}$ & $92.03$    \\
  & MAE & $\bm{0}$ & $0.09$ \\
  \cmidrule{2-4}
  \multirow{2}{*}{GC \cite{binder2017}} & $p_{\text{det}}$ & $0$ & $0$ \\
  & MAE & $10.34$ & $10.26$ \\
  \midrule
  \multirow{2}{*}{$\text{BIC}_{\mbox{\tiny N}}$ \cite{teklehaymanot22017}} & $p_{\text{det}}$  & $\bm{100}$ & $5.08$ \\
  & MAE & $\bm{0}$ & $1.36$ \\
  \cmidrule{2-4}
  \multirow{2}{*}{X-means \cite{pelleg2000}} & $p_{\text{det}}$  & $0$ & $0$\\
  & MAE & $2$ & $2$ \\
  \bottomrule
 \end{tabular}
\label{tab:per_real}
\end{table}
\begin{figure}[tb]
 \centering
 \includegraphics[width=0.85\linewidth]{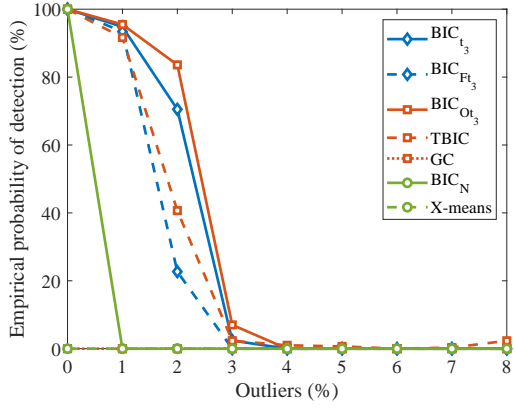}
 \caption{Empirical probability of detection in \% for the Old Faithful data set as a function of the percentage of replacement outliers.}
 \label{fig:outliers_OF}
\end{figure}

\section{Conclusion}
\label{sec:conclusion}
We derived a robust cluster enumeration criterion by formulating the problem of estimating the number of clusters as maximization of the posterior probability of multivariate $t_\nu$ candidate models. The derivation is based on Bayes' theorem and asymptotic approximations. Further, we refined the penalty term of the robust criterion for the finite sample regime. Since both robust criteria require cluster parameter estimates as an input, we proposed a two-step cluster enumeration algorithm that uses the EM algorithm to partition the data and estimate cluster parameters prior to the calculation of either of the robust criteria. The following two statements can be made with respect to the original BIC: First, the asymptotic criterion derived specifically for cluster analysis has a different penalty term compared to the original BIC based on multivariate $t_\nu$ candidate models. Second, since the derived asymptotic criterion converges to the original BIC as data size goes to infinity, we are able to provide a justification for the use of the original BIC with multivariate $t_\nu$ candidate models. The performance of the proposed cluster enumeration algorithm is demonstrated using numerical and real data experiments. We showed superiority of the proposed robust cluster enumeration methods in estimating the number of clusters in contaminated data sets.

\appendices

\section{Maximum Likelihood Estimators of the Parameters of the Multivariate $t_\nu$ Distribution}
\label{app:A}
The log-likelihood function of the data points that belong to the $m$th cluster is given by 
\begin{align}
  \log \mathcal{L}(\bm{\theta}_m|\mathcal{X}_m) &\!=\! \log \prod_{\bm{x}_n\in\mathcal{X}_m} p(\bm{x}_n\in\mathcal{X}_m) f(\bm{x}_n|\bm{\theta}_m) \nonumber \\
  &\!=N_m\log\frac{N_m}{N} + N_m\log\frac{\Gamma\left((\nu_m+r)/2\right)}{\Gamma\left(\nu_m/2\right)(\pi\nu_m)^{r/2}} \nonumber \\
  &-\!\frac{N_m}{2}\log|\bm{\Psi}_m|\!-\! \frac{(\nu_m+r)}{2}\!\!\!\!\sum_{\bm{x}_n\in\mathcal{X}_m}\!\!\!\!\log\!\left(\!1+\frac{\delta_n}{\nu_m}\!\right)\!, 
 \label{eq:loglikeli}
\end{align}
where $\delta_n=(\bm{x}_n-\bm{\mu}_m)^\top\bm{\Psi}_m^{-1}(\bm{x}_n-\bm{\mu}_m)$ is the squared Mahalanobis distance and $\Gamma(\cdot)$ is the gamma function. To find the maximum likelihood estimators of the centroid $\bm{\mu}_m$ and the scatter matrix $\bm{\Psi}_m$, we first derivate the log-likelihood function with respect to each parameter, which results in 
\begin{align}
\frac{\partial\log \mathcal{L}(\bm{\theta}_m|\mathcal{X}_m)}{\partial\bm{\mu}_m}  
&= \frac{1}{2}\sum_{\bm{x}_n\in\mathcal{X}_m}w_n\biggl(\frac{d\bm{\mu}_m^\top}{d\bm{\mu}_m}\bm{\Psi}_m^{-1}\tilde{\bm{x}}_n \nonumber \\
&+ \tilde{\bm{x}}_n^\top\bm{\Psi}_m^{-1}\frac{d\bm{\mu}_m}{d\bm{\mu}_m}\biggr) \nonumber \\
&= \sum_{\bm{x}_n\in\mathcal{X}_m}w_n\tilde{\bm{x}}_n^\top\bm{\Psi}_m^{-1}
\label{eq:firstdermu} \\
\frac{\partial\log \mathcal{L}(\bm{\theta}_m|\mathcal{X}_m)}{\partial\bm{\Psi}_m} 
&=-\frac{N_m}{2}\tr\left(\bm{\Psi}_m^{-1}\frac{d\bm{\Psi}_m}{d\bm{\Psi}_m}\right) \nonumber \\
&- \frac{1}{2}\sum_{\bm{x}_n\in\mathcal{X}_m}\frac{\nu_m+r}{\nu_m+\delta_n}\frac{\partial\delta_n}{\partial\bm{\Psi}_m} \nonumber \\
&=-\frac{N_m}{2}\tr\left(\bm{\Psi}_m^{-1}\frac{d\bm{\Psi}_m}{d\bm{\Psi}_m}\right) \nonumber \\
&+\frac{1}{2}\sum_{\bm{x}_n\in\mathcal{X}_m}w_n\tilde{\bm{x}}_n^\top\bm{\Psi}_m^{-1}\frac{d\bm{\Psi}_m}{d\bm{\Psi}_m}\bm{\Psi}_m^{-1}\tilde{\bm{x}}_n,
\label{eq:firdersigma}
\end{align}
where $\tilde{\bm{x}}_n=\bm{x}_n-\bm{\mu}_m$ and
\begin{equation}
w_n = \frac{\nu_m+r}{\nu_m+\delta_n}
\label{eq:weight}
\end{equation}
is the weight given to $\bm{x}_n$. Then, setting Eqs.~\eqref{eq:firstdermu} and \eqref{eq:firdersigma} to zero and simplifying the resulting expressions result in
\begin{align}
\hat{\bm{\mu}}_m &= \frac{\sum_{\bm{x}_n\in\mathcal{X}_m}w_n\bm{x}_n}{\sum_{\bm{x}_n\in\mathcal{X}_m}w_n} \label{eq:muhat_2} \\
\hat{\bm{\Psi}}_m &= \frac{1}{N_m}\sum_{\bm{x}_n\in\mathcal{X}_m}w_n\tilde{\bm{x}}_n\tilde{\bm{x}}_n^\top. \label{eq:psihat}
\end{align}

Note that, to make the paper self contained, the vector and matrix differentiation rules used in Appendices~\ref{app:A} and \ref{app:B} are discussed in Appendix~\ref{app:d}. 

\section{Proof of Theorem 1}
\label{app:B}
Proving Theorem 1 requires finding an asymptotic approximation for $|\hat{\bm{J}}_m|$ in Eq.~\eqref{eq:finalposterior} and, consequently, deriving an expression for $\text{BIC}_{t_\nu}(M_l)$. We start the proof by writing the Fisher information matrix in a compact form as
\begin{equation}
\hat{\bm{J}}_m \!\!=\!\! 
\begin{bmatrix}
-\frac{\partial^2\log \mathcal{L}(\bm{\theta}_m|\mathcal{X}_m)}{\partial\bm{\mu}_m\partial\bm{\mu}_m^\top}\big|_{\bm{\theta}_m=\hat{\bm{\theta}}_m} & -\frac{\partial^2\log \mathcal{L}(\bm{\theta}_m|\mathcal{X}_m)}{\partial\bm{\mu}_m\partial\bm{\Psi}_m}\big|_{\bm{\theta}_m=\hat{\bm{\theta}}_m} \\
-\frac{\partial^2\log \mathcal{L}(\bm{\theta}_m|\mathcal{X}_m)}{\partial\bm{\Psi}_m\partial\bm{\mu}_m^\top}\big|_{\bm{\theta}_m=\hat{\bm{\theta}}_m} & -\frac{\partial^2\log \mathcal{L}(\bm{\theta}_m|\mathcal{X}_m)}{\partial\bm{\Psi}_m\partial\bm{\Psi}_m}\big|_{\bm{\theta}_m=\hat{\bm{\theta}}_m}
\end{bmatrix}\!\!. 
\label{eq:FIM1}
\end{equation} 
To simplify notation Eq.~\eqref{eq:FIM1} is written as
\begin{equation}
\hat{\bm{J}}_m =
\begin{bmatrix}
-\hat{\bm{J}}_{\bm{\mu}\bm{\mu}^\top} & -\hat{\bm{J}}_{\bm{\mu}\bm{\Psi}} \\
-\hat{\bm{J}}_{\bm{\Psi}\bm{\mu}^\top} & -\hat{\bm{J}}_{\bm{\Psi}\bm{\Psi}}
\end{bmatrix}.
\label{eq:FIM2}
\end{equation}

The first diagonal element of the Fisher information matrix is the derivative of Eq.~\eqref{eq:firstdermu} with respect to $\bm{\mu}_m^\top$ which is given by
\begin{align}
\bm{J}_{\bm{\mu}\bm{\mu}^\top}  
&= \sum_{\bm{x}_n\in\mathcal{X}_m}\biggl(\frac{2w_n^2}{\nu_m+r}\tilde{\bm{x}}_n^\top\bm{\Psi}_m^{-1}\frac{d\bm{\mu}_m}{d\bm{\mu}_m^\top}\tilde{\bm{x}}_n^\top\bm{\Psi}_m^{-1} \nonumber \\
&- w_n\frac{d\bm{\mu}_m^\top}{d\bm{\mu}_m^\top}\bm{\Psi}_m^{-1} \biggr) 
\label{eq:secdermu1}
\end{align}
Applying the vec operator, Eq.~\eqref{eq:secdermu1} is further simplified to
\begin{align}
\bm{J}_{\bm{\mu}\bm{\mu}^\top}  &=  \sum_{\bm{x}_n\in\mathcal{X}_m}\biggl(\frac{2w_n^2}{\nu_m+r}\text{vec}\left(\tilde{\bm{x}}_n^\top\bm{\Psi}_m^{-1}\frac{d\bm{\mu}_m}{d\bm{\mu}_m^\top}\tilde{\bm{x}}_n^\top\bm{\Psi}_m^{-1}\right) \nonumber \\
&- w_n\text{vec}\left(\frac{d\bm{\mu}_m^\top}{d\bm{\mu}_m^\top}\bm{\Psi}_m^{-1}\right) \biggr) \nonumber \\
&= \sum_{\bm{x}_n\in\mathcal{X}_m}\biggl(\frac{2w_n^2}{\nu_m+r}(\bm{\Psi}_m^{-1}\tilde{\bm{x}}_n\otimes\tilde{\bm{x}}_n^\top\bm{\Psi}_m^{-1})\text{vec}\left(\frac{d\bm{\mu}_m}{d\bm{\mu}_m^\top}\right) \nonumber \\
&- w_n(\bm{\Psi}_m^{-1}\otimes\bm{I}_1)\text{vec}\left(\frac{d\bm{\mu}_m^\top}{d\bm{\mu}_m^\top}\right)\biggr) \nonumber \\
&= \frac{2}{\nu_m+r}\bm{\Psi}_m^{-1}\left(\sum_{\bm{x}_n\in\mathcal{X}_m}w_n^2\tilde{\bm{x}}_n\tilde{\bm{x}}_n^\top\right)\bm{\Psi}_m^{-1} \nonumber \\
&- \bm{\Psi}_m^{-1}\sum_{\bm{x}_n\in\mathcal{X}_m}w_n.
\label{eq:secdermu2}
\end{align}
Evaluating Eq.~\eqref{eq:secdermu2} at the maximum likelihood estimates result in
\begin{align}
\hat{\bm{J}}_{\bm{\mu}\bm{\mu}^\top} &= \frac{2}{\nu_m+r}\hat{\bm{\Psi}}_m^{-1}\left(\sum_{\bm{x}_n\in\mathcal{X}_m}w_n^2\tilde{\bm{x}}_n\tilde{\bm{x}}_n^\top\right)\hat{\bm{\Psi}}_m^{-1} \nonumber \\
&- \hat{\bm{\Psi}}_m^{-1}\sum_{\bm{x}_n\in\mathcal{X}_m}w_n,
\label{eq:secdermu3}
\end{align}
where, in this case, $\tilde{\bm{x}}_n = \bm{x}_n-\hat{\bm{\mu}}_m$.

Next, we move to the off-diagonal elements of the Fisher information matrix. Due to the symmetry of $\hat{\bm{J}}_m$,  the following holds:
\begin{equation}
\hat{\bm{J}}_{\bm{\Psi}\bm{\mu}^\top}=\hat{\bm{J}}_{\bm{\mu}\bm{\Psi}}^\top.
\end{equation}
As a result, in this manuscript, we show only the derivation of $\hat{\bm{J}}_{\bm{\mu}\bm{\Psi}}$. To this end, derivating Eq.~\eqref{eq:firstdermu} with respect to $\bm{\Psi}_m$ results in 
\begin{align}
\bm{J}_{\bm{\mu}\bm{\Psi}} 
&= -\sum_{\bm{x}_n\in\mathcal{X}_m}\frac{(\nu_m+r)}{(\nu_m+\delta_n)^2}\frac{\partial \delta_n}{\partial\bm{\Psi}_m}\tilde{\bm{x}}_n^\top\bm{\Psi}_m^{-1} \nonumber \\
&+ \sum_{\bm{x}_n\in\mathcal{X}_m}w_n\tilde{\bm{x}}_n^\top\frac{d\bm{\Psi}_m^{-1}}{d\bm{\Psi}_m} \nonumber \\
&= \frac{1}{\nu_m+r}\sum_{\bm{x}_n\in\mathcal{X}_m}w_n^2\tilde{\bm{x}}_n^\top\bm{\Psi}_m^{-1}\frac{d\bm{\Psi}_m}{d\bm{\Psi}_m}\bm{\Psi}_m^{-1}\tilde{\bm{x}}_n\tilde{\bm{x}}_n^\top\bm{\Psi}_m^{-1} \nonumber \\
&- \sum_{\bm{x}_n\in\mathcal{X}_m}w_n\tilde{\bm{x}}_n^\top\bm{\Psi}_m^{-1}\frac{d\bm{\Psi}_m}{d\bm{\Psi}_m}\bm{\Psi}_m^{-1}. 
\label{eq:secdermusigma1}
\end{align}
To further simplify Eq.~\eqref{eq:secdermusigma1} the vec operator is applied to it and this results in   
\begin{align}
\bm{J}_{\bm{\mu}\bm{\Psi}}  &= \frac{1}{\nu_m+r}\!\!\sum_{\bm{x}_n\in\mathcal{X}_m}\!\!\!\!w_n^2\text{vec}\left(\tilde{\bm{x}}_n^\top\bm{\Psi}_m^{-1}\frac{d\bm{\Psi}_m}{d\bm{\Psi}_m}\bm{\Psi}_m^{-1}\tilde{\bm{x}}_n\tilde{\bm{x}}_n^\top\bm{\Psi}_m^{-1}\right) \nonumber \\
&- \sum_{\bm{x}_n\in\mathcal{X}_m}\!\!\!\!w_n\text{vec}\left(\tilde{\bm{x}}_n^\top\bm{\Psi}_m^{-1}\frac{d\bm{\Psi}_m}{d\bm{\Psi}_m}\bm{\Psi}_m^{-1}\right) \nonumber \\
&= \frac{1}{\nu_m+r}\sum_{\bm{x}_n\in\mathcal{X}_m}\!\!\!\!\biggl(w_n^2\left(\bm{\Psi}_m^{-1}\tilde{\bm{x}}_n\tilde{\bm{x}}_n^\top\bm{\Psi}_m^{-1}\otimes\tilde{\bm{x}}_n^\top\bm{\Psi}_m^{-1}\right)\nonumber \\
&*\frac{d\text{vec}(\bm{\Psi}_m)}{d\bm{\Psi}_m}\biggr) - \!\!\!\sum_{\bm{x}_n\in\mathcal{X}_m}\!\!\!\!w_n\left(\bm{\Psi}_m^{-1}\otimes\tilde{\bm{x}}_n^\top\bm{\Psi}_m^{-1}\right)\frac{d\text{vec}(\bm{\Psi}_m)}{d\bm{\Psi}_m}. 
\label{eq:secdermusigma2}
\end{align}
The scatter matrix $\bm{\Psi}_m$, $m=1,\ldots,l$, is a symmetric and positive definite matrix. Hence, $\text{vec}(\bm{\Psi}_m)=\bm{D}\bm{u}_m$, where $\text{vec}(\bm{\Psi}_m)\in\mathbb{R}^{r^2\times 1}$ represents the stacking of the elements of $\bm{\Psi}_m$ into a long column vector, $\bm{D}\in\mathbb{R}^{r^2\times\frac{1}{2}r(r+1)}$ denotes the duplication matrix, and $\bm{u}_m\in\mathbb{R}^{\frac{1}{2}r(r+1)\times 1}$ contains the unique elements of $\bm{\Psi}_m$ \cite{magnus2007}. Taking the symmetry of the scatter matrix into account and replacing $d\bm{\Psi}_m$ by $d\bm{u}_m$, Eq.~\eqref{eq:secdermusigma2} simplify to 
\begin{align}
\bm{J}_{\bm{\mu}\bm{\Psi}} 
&= \frac{1}{\nu_m+r}\sum_{\bm{x}_n\in\mathcal{X}_m}w_n^2\left(\bm{\Psi}_m^{-1}\tilde{\bm{x}}_n\tilde{\bm{x}}_n^\top\bm{\Psi}_m^{-1}\otimes\tilde{\bm{x}}_n^\top\bm{\Psi}_m^{-1}\right)\bm{D} \nonumber \\
&- \left(\bm{\Psi}_m^{-1}\otimes\left(\sum_{\bm{x}_n\in\mathcal{X}_m}w_n\tilde{\bm{x}}_n^\top\right)\bm{\Psi}_m^{-1}\right)\bm{D}.
\label{eq:secdermusigma3}
\end{align}
Evaluating Eq.~\eqref{eq:secdermusigma3} at the maximum likelihood estimates results in 
\begin{align}
\hat{\bm{J}}_{\bm{\mu}\bm{\Psi}}&= \frac{1}{\nu_m+r}\sum_{\bm{x}_n\in\mathcal{X}_m}w_n^2\left(\hat{\bm{\Psi}}_m^{-1}\tilde{\bm{x}}_n\tilde{\bm{x}}_n^\top\hat{\bm{\Psi}}_m^{-1}\otimes\tilde{\bm{x}}_n^\top\hat{\bm{\Psi}}_m^{-1}\right)\bm{D} 
\label{eq:secdermusigma4}
\end{align}
since
\begin{equation*}
\sum_{\bm{x}_n\in\mathcal{X}_m}w_n\left(\bm{x}_n-\hat{\bm{\mu}}_m\right) = \!\!\sum_{\bm{x}_n\in\mathcal{X}_m}w_n\bm{x}_n - \sum_{\bm{x}_n\in\mathcal{X}_m}w_n\hat{\bm{\mu}}_m = 0. 
\end{equation*}

The last element of the Fisher information matrix in our derivation is $\hat{\bm{J}}_{\bm{\Psi}\bm{\Psi}}$. We proceed with the derivation of $\hat{\bm{J}}_{\bm{\Psi}\bm{\Psi}}$ by applying the vec operator to Eq.~\eqref{eq:firdersigma} and simplifying it as follows:
\begin{align}
\frac{\partial\text{vec}\log \mathcal{L}(\bm{\theta}_m|\mathcal{X}_m)}{\partial\bm{\Psi}_m} &= -\frac{N_m}{2}\text{vec}\left(\tr\left(\bm{\Psi}_m^{-1}\frac{d\bm{\Psi}_m}{d\bm{\Psi}_m}\right)\right) \nonumber \\
&+\frac{1}{2}\!\!\!\sum_{\bm{x}_n\in\mathcal{X}_m}\!\!\!w_n\text{vec}\left(\!\!\tilde{\bm{x}}_n^\top\bm{\Psi}_m^{-1}\frac{d\bm{\Psi}_m}{d\bm{\Psi}_m}\bm{\Psi}_m^{-1}\tilde{\bm{x}}_n\!\!\right) \nonumber \\
&= -\frac{N_m}{2}\text{vec}\left(\bm{\Psi}_m^{-1}\right)^\top\text{vec}\left(\frac{d\bm{\Psi}_m}{d\bm{\Psi}_m}\right) \nonumber \\
&+ \frac{1}{2}\sum_{\bm{x}_n\in\mathcal{X}_m}\biggl(w_n\left(\tilde{\bm{x}}_n^\top\bm{\Psi}_m^{-1}\otimes\tilde{\bm{x}}_n^\top\bm{\Psi}_m^{-1}\right)\nonumber \\
&*\text{vec}\left(\frac{d\bm{\Psi}_m}{d\bm{\Psi}_m}\right)\biggr) \nonumber \\
&= -\frac{N_m}{2}\text{vec}\left(\bm{\Psi}_m^{-1}\right)^\top\bm{D} \nonumber \\
&+ \frac{1}{2}\!\!\sum_{\bm{x}_n\in\mathcal{X}_m}\!\!\!w_n\left(\tilde{\bm{x}}_n^\top\bm{\Psi}_m^{-1}\otimes\tilde{\bm{x}}_n^\top\bm{\Psi}_m^{-1}\right)\bm{D}
\label{eq:firdersigma2}
\end{align}
Then, to obtain a final expression for $\hat{\bm{J}}_{\bm{\Psi}\bm{\Psi}}$ we derivate Eq.~\eqref{eq:firdersigma2} by $\bm{\Psi}_m$ which results in 
\begin{align}
\bm{J}_{\bm{\Psi}\bm{\Psi}} 
&= \frac{N_m}{2}\text{vec}\left(\bm{\Psi}_m^{-1}\frac{d\bm{\Psi}_m}{d\bm{\Psi}_m}\bm{\Psi}_m^{-1}\right)^\top\bm{D} \nonumber \\
&+ \frac{1}{2}\sum_{\bm{x}_n\in\mathcal{X}_m}\frac{\partial w_n}{\partial\bm{\Psi}_m}\left(\tilde{\bm{x}}_n^\top\bm{\Psi}_m^{-1}\otimes\tilde{\bm{x}}_n^\top\bm{\Psi}_m^{-1}\right)\bm{D} \nonumber \\
&+ \frac{1}{2}\sum_{\bm{x}_n\in\mathcal{X}_m}w_n\frac{\partial}{\partial\bm{\Psi}_m}\left(\tilde{\bm{x}}_n^\top\bm{\Psi}_m^{-1}\otimes\tilde{\bm{x}}_n^\top\bm{\Psi}_m^{-1}\right)\bm{D}. 
\label{eq:secdersigma1}
\end{align}
By applying the vec operator, Eq.~\eqref{eq:secdersigma1} is further simplified as
\begin{align}
\bm{J}_{\bm{\Psi}\bm{\Psi}} 
&= \frac{N_m}{2}\bm{D}^\top\text{vec}\left(\bm{\Psi}_m^{-1}\frac{d\bm{\Psi}_m}{d\bm{\Psi}_m}\bm{\Psi}_m^{-1}\right) \nonumber \\
&+ \frac{1}{2}\sum_{\bm{x}_n\in\mathcal{X}_m}\frac{w_n^2}{\nu_m+r}\text{vec}\biggl(\tilde{\bm{x}}_n^\top\bm{\Psi}_m^{-1}\frac{d\bm{\Psi}_m}{d\bm{\Psi}_m}\bm{\Psi}_m^{-1}\tilde{\bm{x}}_n\nonumber \\
&*\left(\tilde{\bm{x}}_n^\top\bm{\Psi}_m^{-1}\otimes\tilde{\bm{x}}_n^\top\bm{\Psi}_m^{-1}\right)\bm{D}\biggr) \nonumber \\
&+ \frac{1}{2}\sum_{\bm{x}_n\in\mathcal{X}_m}w_n\bm{D}^\top\text{vec}\left(\frac{\partial}{\partial\bm{\Psi}_m}\left(\tilde{\bm{x}}_n^\top\bm{\Psi}_m^{-1}\otimes\tilde{\bm{x}}_n^\top\bm{\Psi}_m^{-1}\right)\right) \nonumber \\
&= \frac{N_m}{2}\bm{D}^\top(\bm{\Psi}_m^{-1}\otimes\bm{\Psi}_m^{-1})\frac{d\text{vec}\left(\bm{\Psi}_m\right)}{d\bm{\Psi}_m} \nonumber \\
&+ \frac{1}{2}\!\!\sum_{\bm{x}_n\in\mathcal{X}_m}\!\!\!\frac{w_n^2}{\nu_m+r}\biggl(\left(\bm{\Psi}_m^{-1}\tilde{\bm{x}}_n\left(\tilde{\bm{x}}_n^\top\bm{\Psi}_m^{-1}\otimes\tilde{\bm{x}}_n^\top\bm{\Psi}_m^{-1}\right)\bm{D}\right)^\top \nonumber \\
&\otimes\tilde{\bm{x}}_n^\top\bm{\Psi}_m^{-1}\biggr)\frac{d\text{vec}\left(\bm{\Psi}_m\right)}{d\bm{\Psi}_m} \nonumber \\
&+ \frac{1}{2}\sum_{\bm{x}_n\in\mathcal{X}_m}\biggl(w_n\bm{D}^\top\left(\bm{I}_r\otimes\bm{K}_{r,1}\otimes\bm{I}_1\right) \nonumber \\
&*\biggl[\left(\bm{I}_r\otimes\text{vec}\left(\tilde{\bm{x}}_n^\top\bm{\Psi}_m^{-1}\right)\right)\frac{\partial}{\partial\bm{\Psi}_m}\left(\text{vec}\left(\tilde{\bm{x}}_n^\top\bm{\Psi}_m^{-1}\right)\right) \nonumber \\
&+\left(\text{vec}\left(\tilde{\bm{x}}_n^\top\bm{\Psi}_m^{-1}\right)\otimes\bm{I}_r\right)\frac{\partial}{\partial\bm{\Psi}_m}\left(\text{vec}\left(\tilde{\bm{x}}_n^\top\bm{\Psi}_m^{-1}\right)\right)\biggr]\biggr) \nonumber \\
&= \frac{N_m}{2}\bm{D}^\top(\bm{\Psi}_m^{-1}\otimes\bm{\Psi}_m^{-1})\bm{D} \nonumber \\
&+ \frac{1}{2}\sum_{\bm{x}_n\in\mathcal{X}_m}\frac{w_n^2}{\nu_m+r}\bm{D}^\top\left(\bm{\Psi}_m^{-1}\tilde{\bm{x}}_n\otimes\bm{\Psi}_m^{-1}\tilde{\bm{x}}_n\right)\nonumber \\
&*\left(\tilde{\bm{x}}_n^\top\bm{\Psi}_m^{-1}\otimes\tilde{\bm{x}}_n^\top\bm{\Psi}_m^{-1}\right)\bm{D}\nonumber \\
&- \frac{1}{2}\sum_{\bm{x}_n\in\mathcal{X}_m}\biggl(w_n\bm{D}^\top\bm{I}_{r^2}\biggl[\left(\bm{I}_r\otimes\bm{\Psi}_m^{-1}\tilde{\bm{x}}_n\right) \nonumber \\
&+ \left(\bm{\Psi}_m^{-1}\tilde{\bm{x}}_n\otimes\bm{I}_r\right)\biggr]\text{vec}\left(\tilde{\bm{x}}_n^\top\bm{\Psi}_m^{-1} \frac{d\bm{\Psi}_m}{d\bm{\Psi}_m}\bm{\Psi}_m^{-1}\right) \nonumber \\
&= \frac{N_m}{2}\bm{D}^\top(\bm{\Psi}_m^{-1}\otimes\bm{\Psi}_m^{-1})\bm{D} \nonumber \\
&+ \frac{1}{2}\sum_{\bm{x}_n\in\mathcal{X}_m}\frac{w_n^2}{\nu_m+r}\bm{D}^\top\biggl(\bm{\Psi}_m^{-1}\tilde{\bm{x}}_n\tilde{\bm{x}}_n\bm{\Psi}_m^{-1}\nonumber \\
&\otimes\bm{\Psi}_m^{-1}\tilde{\bm{x}}_n\tilde{\bm{x}}_n\bm{\Psi}_m^{-1}\biggr)\bm{D}\nonumber \\
&- \frac{1}{2}\sum_{\bm{x}_n\in\mathcal{X}_m}\biggl(w_n\bm{D}^\top\biggl[\left(\bm{I}_r\otimes\bm{\Psi}_m^{-1}\tilde{\bm{x}}_n\right) \nonumber \\
&*\left(\bm{\Psi}_m^{-1}\otimes\tilde{\bm{x}}_n^\top\bm{\Psi}_m^{-1}\right) + \left(\bm{\Psi}_m^{-1}\tilde{\bm{x}}_n\otimes\bm{I}_r\right) \nonumber \\
&*\left(\bm{\Psi}_m^{-1}\otimes\tilde{\bm{x}}_n^\top\bm{\Psi}_m^{-1}\right)\biggr]\bm{D}\biggr) \nonumber \\
&= \frac{N_m}{2}\bm{D}^\top(\bm{\Psi}_m^{-1}\otimes\bm{\Psi}_m^{-1})\bm{D} \nonumber \\
&+ \frac{1}{2(\nu_m+r)}\sum_{\bm{x}_n\in\mathcal{X}_m}w_n^2\bm{D}^\top\biggl(\bm{\Psi}_m^{-1}\tilde{\bm{x}}_n\tilde{\bm{x}}_n\bm{\Psi}_m^{-1}\nonumber \\
&\otimes\bm{\Psi}_m^{-1}\tilde{\bm{x}}_n\tilde{\bm{x}}_n\bm{\Psi}_m^{-1}\biggr)\bm{D}\nonumber \\
&- \frac{1}{2}\sum_{\bm{x}_n\in\mathcal{X}_m}w_n\bm{D}^\top\left(\bm{\Psi}_m^{-1}\otimes\bm{\Psi}_m^{-1}\tilde{\bm{x}}_n\tilde{\bm{x}}_n^\top\bm{\Psi}_m^{-1}\right)\bm{D} \nonumber \\
&- \frac{1}{2}\sum_{\bm{x}_n\in\mathcal{X}_m}w_n\bm{D}^\top\left(\bm{\bm{\Psi}_m^{-1}\tilde{\bm{x}}_n\tilde{\bm{x}}_n^\top\bm{\Psi}_m^{-1}\otimes\Psi}_m^{-1}\right)\bm{D}.
\label{eq:secdersigma2}
\end{align}
Evaluting Eq.~\eqref{eq:secdersigma2} at the maximum likelihood estimates results in
\begin{align}
\hat{\bm{J}}_{\bm{\Psi}\bm{\Psi}} &= \frac{N_m}{2}\bm{D}^\top(\hat{\bm{\Psi}}_m^{-1}\otimes\hat{\bm{\Psi}}_m^{-1})\bm{D} \nonumber \\
&+ \frac{1}{2(\nu_m+r)}\sum_{\bm{x}_n\in\mathcal{X}_m}w_n^2\bm{D}^\top\biggl(\hat{\bm{\Psi}}_m^{-1}\tilde{\bm{x}}_n\tilde{\bm{x}}_n\hat{\bm{\Psi}}_m^{-1}\nonumber \\
&\otimes\hat{\bm{\Psi}}_m^{-1}\tilde{\bm{x}}_n\tilde{\bm{x}}_n\hat{\bm{\Psi}}_m^{-1}\biggr)\bm{D}\nonumber \\
&-\frac{1}{2}\bm{D}^\top\left(\hat{\bm{\Psi}}_m^{-1}\otimes N_m\hat{\bm{\Psi}}_m^{-1}\right)\bm{D} \nonumber \\
&-\frac{1}{2}\bm{D}^\top\left(N_m\hat{\bm{\Psi}}_m^{-1}\otimes\hat{\bm{\Psi}}_m^{-1}\right)\bm{D} \nonumber \\
&= -\frac{N_m}{2}\bm{D}^\top(\hat{\bm{\Psi}}_m^{-1}\otimes\hat{\bm{\Psi}}_m^{-1})\bm{D} \nonumber \\
&+ \frac{1}{2(\nu_m+r)}\bm{D}^\top\left(\hat{\bm{\Psi}}_m^{-1}\otimes\hat{\bm{\Psi}}_m^{-1}\right)\nonumber \\
&*\sum_{\bm{x}_n\in\mathcal{X}_m}w_n^2\left(\tilde{\bm{x}}_n\tilde{\bm{x}}_n^\top\otimes\tilde{\bm{x}}_n\tilde{\bm{x}}_n^\top\right)\left(\hat{\bm{\Psi}}_m^{-1}\otimes\bm{\Psi}_m^{-1}\right)\bm{D}.
\label{eq:secdersigma3}
\end{align}

In face of Eqs.~\eqref{eq:secdermu3}, \eqref{eq:secdermusigma4} and \eqref{eq:secdersigma3}, three normalization factors exist, which are $\sum_{\bm{x}_n\in\mathcal{X}_m}w_n^2$, $\sum_{\bm{x}_n\in\mathcal{X}_m}w_n$, and $N_m$. While the relationship between $\sum_{\bm{x}_n\in\mathcal{X}_m}w_n^2$ and $\sum_{\bm{x}_n\in\mathcal{X}_m}w_n$ is non-trivial, starting from Eq.~\eqref{eq:psihat} and doing straightforward calculations the authors in \cite{kent1994} showed that
\begin{equation}
 \sum_{\bm{x}_n\in\mathcal{X}_m}w_n = N_m.
\end{equation}
As a result, we end up with only two normalization factors, namely $\sum_{\bm{x}_n\in\mathcal{X}_m}w_n^2$ and $N_m$. Given that $l \ll N$, $N\rightarrow\infty$ indicates that $\epsilon\rightarrow\infty$, where $\epsilon$ is given by Eq.~\eqref{eq:epsilon}. Hence, as $N\rightarrow\infty$ 
\begin{equation}
\bigg|\frac{1}{\epsilon} \hat{\bm{J}}_m\bigg|\approx\mathcal{O}(1),
\label{eq:J5}
\end{equation}
where $\mathcal{O}(1)$ denotes Landau's term which tends to a constant as $N\rightarrow\infty$. Using the result in Eq.~\eqref{eq:J5}, Eq.~\eqref{eq:finalposterior} can be simplified to 
\begin{align}
 \log p(M_l|\mathcal{X}) & \approx \log p(M_l) + \sum_{m=1}^l\log\left( f(\hat{\bm{\theta}}_m|M_l)\mathcal{L}(\hat{\bm{\theta}}_m|\mathcal{X}_m)\right) \nonumber \\
 &+ \frac{lq}{2}\log 2\pi - \frac{1}{2}\sum_{m=1}^l\log \left|\epsilon\frac{\hat{\bm{J}}_m}{\epsilon}\right| -\log f(\mathcal{X}) \nonumber \\
 &= \log p(M_l) + \sum_{m=1}^l\log\left( f(\hat{\bm{\theta}}_m|M_l)\mathcal{L}(\hat{\bm{\theta}}_m|\mathcal{X}_m)\right) \nonumber \\
 &+\frac{lq}{2}\log 2\pi - \frac{q}{2}\!\sum_{m=1}^l\log \epsilon -\frac{1}{2}\!\sum_{m=1}^l\log\left|\frac{\hat{\bm{J}}_m}{\epsilon}\right| \nonumber \\
 &-\log f(\mathcal{X}),
\label{eq:finalposterior2}
\end{align}
where $q=\frac{1}{2}r(r+3)$ is the number of estimated parameters per cluster. 

Assume that
\begin{description}
 \item[$\left(\mathcal{A.}3\right)$] $p(M_l)$ and $f(\hat{\bm{\theta}}_l|M_l)$ are independent of the data length $N$.
\end{description}
Ignoring the terms in Eq.~\eqref{eq:finalposterior2} that do not grow as $N\rightarrow\infty$ results in
\begin{align}
 \text{BIC}_{t_\nu}(M_l) &\triangleq \log p(M_l|\mathcal{X}) \nonumber \\
 &\approx \sum\limits_{m=1}^l\log\mathcal{L}(\hat{\bm{\theta}}_m|\mathcal{X}_m) -\frac{q}{2}\sum_{m=1}^l\log \epsilon -\log f(\mathcal{X}). 
 \label{eq:BICt2}
\end{align}
Substituting the expression of $\log\mathcal{L}(\hat{\bm{\theta}}_m|\mathcal{X}_m)$, given by Eq.~\eqref{eq:loglikeli}, into Eq.~\eqref{eq:BICt2} results in
\begin{align}
 \text{BIC}_{t_\nu}(M_l) 
  &= \sum_{m=1}^l\!N_m\log N_m - \!N\log N - \!\!\sum_{m=1}^l\frac{N_m}{2}\log|\hat{\bm{\Psi}}_m| \nonumber  \\
  & + \sum_{m=1}^lN_m\log\frac{\Gamma\left((\nu_m+r)/2\right)}{\Gamma\left(\nu_m/2\right)(\pi\nu_m)^{r/2}} \nonumber \\
  & -\frac{1}{2}\sum_{m=1}^l\sum_{\bm{x}_n\in\mathcal{X}_m}(\nu_m+r)\log\left(\!1+\frac{\delta_n}{\nu_m}\!\right) \nonumber \\
  &  -\!\frac{q}{2}\sum_{m=1}^l\log \epsilon -\log f(\mathcal{X}).
  \label{eq:BICt3}
\end{align}
Finally, ignoring the model independent terms in Eq.~\eqref{eq:BICt3} results in Eq.~\eqref{eq:BICt}. This concludes the proof. 

\section{Calculation of the Determinant of the Fisher Information Matrix}
\label{app:C}
The Fisher information matrix, given by Eq.~\eqref{eq:FIM1}, is a block matrix and its determinant is calculated as 
\begin{align}
 |\hat{\bm{J}}_m| &= |-\hat{\bm{J}}_{\bm{\mu}\bm{\mu}^\top} + \hat{\bm{J}}_{\bm{\mu}\bm{\Psi}}\hat{\bm{J}}_{\bm{\Psi}\bm{\Psi}}^{-1}\hat{\bm{J}}_{\bm{\Psi}\bm{\mu}^\top}||-\hat{\bm{J}}_{\bm{\Psi}\bm{\Psi}}|,
 \label{eq:est_J}
\end{align}
where $\hat{\bm{J}}_{\bm{\mu}\bm{\mu}^\top}$, $\hat{\bm{J}}_{\bm{\mu}\bm{\Psi}}$, and $\hat{\bm{J}}_{\bm{\Psi}\bm{\Psi}}$ are given by Eqs.~\eqref{eq:secdermu3}, \eqref{eq:secdermusigma4} and \eqref{eq:secdersigma3}, respectively.   

\section{Vector and Matrix Differentiation Rules}
\label{app:d}
Given that $x\in\mathbb{R}^{1\times 1}$, $\bm{x}\in\mathbb{R}^{n\times 1}$, $\bm{y}\in\mathbb{R}^{q\times 1}$, $\bm{z}\in\mathbb{R}^{r\times 1}$, $\bm{X}\in\mathbb{R}^{n\times q}$, $\bm{Y}\in\mathbb{R}^{q\times r}$, and $\bm{Z}\in\mathbb{R}^{r\times m}$,  we have used the following vector and matrix differentiation rules (see \cite{magnus2007} for details):
\begin{align}
\frac{d\left(\bm{x}^\top\bm{x}\right)}{d\bm{x}} &= 2\bm{x}^\top \\
\frac{d\bm{X}^{-1}}{d\bm{X}} &= -\bm{X}^{-1}\frac{d\bm{X}}{d\bm{X}}\bm{X}^{-1} \\
\frac{d\log|\bm{X}|}{d\bm{X}} &= \tr\left(\bm{X}^{-1}\frac{d\bm{X}}{d\bm{X}}\right)  
\end{align}
Following the recommendations in \cite{magnus2010}, the narrow definition of matrix derivative ($\alpha$-derivative), which is defined as 
\begin{equation}
\frac{dF(\bm{X})}{d\bm{X}} = \frac{d\text{vec}\left(F(\bm{X})\right)}{d\text{vec}\left(\bm{X}\right)},
\end{equation}
is used in Appendix~\ref{app:B}. To this end, the following properties of the trace and vec operators are utilized:
\begin{align}
\tr(\bm{X}\bm{Y}) &= \tr(\bm{Y}\bm{X}) \\
\frac{d}{d\bm{X}}\tr\left(\bm{X}\right) &= \tr\left(\frac{d\bm{X}}{d\bm{X}}\right) \\
\frac{d\tr(\bm{X}\bm{Y})}{d\bm{X}}&=\bm{Y} \\
\tr\left(\bm{X}^\top\bm{Y}\right) &= \text{vec}\left(\bm{X}\right)^\top\text{vec}\left(\bm{Y}\right) \\
\text{vec}\left(\bm{x}\right) &= \text{vec}\left(\bm{x}^\top\right) = \bm{x} \\
\frac{d\text{vec}\left(\bm{X}\right)}{d\bm{X}} &= \text{vec}\left(\frac{d\bm{X}}{d\bm{X}}\right) \\
\text{vec}\left(\bm{X}\bm{Y}\right) &= \left(\bm{Y}^\top\otimes\bm{I}_n\right)\text{vec}\left(\bm{X}\right) \nonumber \\
& = \left(\bm{I}_r\otimes\bm{X}\right)\text{vec}\left(\bm{Y}\right) \\
\text{vec}\left(\bm{X}\bm{Y}\bm{Z}\right) &= \left(\bm{Z}^\top\otimes\bm{X}\right)\text{vec}\left(\bm{Y}\right) \\
\frac{d}{d\bm{X}}\left(\text{vec}(\bm{X}\otimes\bm{Y}\right) &= \left(\bm{I}_q\otimes \bm{K}_{r, n}\otimes\bm{I}_q\right) \nonumber \\
&*\biggl[\left(\bm{I}_{nq}\otimes\text{vec}(\bm{Y})\right)\frac{d\text{vec}(\bm{X})}{d\bm{X}} \nonumber \\
& +  \left(\text{vec}(\bm{X})\otimes\bm{I}_{qr}\right)\frac{d\text{vec}(\bm{Y})}{d\bm{X}}\biggr] 
\end{align}
where $\bm{K}_{r,n}\in\mathbb{R}^{rn\times rn}$ is the commutation matrix which possess the following properties:
\begin{align}
\bm{K}_{n,n} &= \bm{K}_{n} \\
\bm{K}_n\bm{D}_n &= \bm{D}_n \\
\left(\bm{X}\otimes\bm{y}^\top\right)\bm{K}_{q} &= \left(\bm{y}^\top\otimes\bm{X}\right).
\end{align}
$\bm{D}_n\in\mathbb{R}^{n^2\times\frac{1}{2}n(n+1)}$ is the duplication matrix. 

In addition, the following properties of the Kronecker product are used in the derivations:
\begin{align}
x(\bm{X}\otimes\bm{Y}) &= (x\bm{X})\otimes\bm{Y} = \bm{X}\otimes (x\bm{Y}) \\
\sum_{n=1}^{N}\left(\bm{X}\otimes\bm{Y}_n\right) &= \bm{X} \otimes\sum_{n=1}^{N}\bm{Y}_n 
\end{align}

\section*{Acknowledgment}
We would like to thank Christian Schroth for reviewing the derivations and providing feedback. 

\bibliographystyle{IEEEtran}
\bibliography{IEEEabrv,myReferences}

\end{document}